\definecolor{citecolor}{HTML}{0071bc}
\definecolor{deepblue}{rgb}{0,0,0.5}
\definecolor{deepred}{rgb}{0.6,0,0}
\definecolor{deepgreen}{rgb}{0,0.5,0}
\newcommand\pythonstyle{\lstset{
basicstyle=\ttfamily\scriptsize,
language=Python,
morekeywords={self, clip, exp, mse_loss, uniform_sample, concatenate, logsumexp},              
keywordstyle=\color{deepblue},
emph={MyClass,__init__},          
emphstyle=\color{deepred},    
stringstyle=\color{deepgreen},
frame=single,                         
showstringspaces=false
}}
\newcommand\pythoninline[1]{{\pythonstyle\lstinline!#1!}}
\def\mathcolor#1#{\@mathcolor{#1}}
\def\@mathcolor#1#2#3{%
  \protect\leavevmode
  \begingroup
    \color#1{#2}#3%
  \endgroup
}
\newcommand{\E}{\mathbb{E}}
\newcommand{\policy}{\pi}
\newcommand{\behavior}{{\pi_\beta}}
\newcommand{\bellman}{\mathcal{B}}
\newcommand{\bs}{s}
\newcommand{\ba}{a}
\newcommand{\methodname}{Cal-QL}
\newtheorem{theorem}{Theorem}[section]
\newtheorem{lemma}[theorem]{Lemma}
\newtheorem{corollary}[theorem]{Corollary}
\newtheorem{definition}[theorem]{Definition}
\newtheorem{assumption}[theorem]{Assumption}
\renewcommand{\mathbf}{\boldsymbol}
\newcommand{\mb}{\mathbf}
\newcommand{\mc}{\mathcal}
\newcommand{\bb}{\mathbb}
\newcommand{\indicator}[1]{\mathbbm 1\left\{#1\right\}}
\def\Ddots{\mathinner{\mkern1mu\raise\p@
\vbox{\kern7\p@\hbox{.}}\mkern2mu
\raise4\p@\hbox{.}\mkern2mu\raise7\p@\hbox{.}\mkern1mu}}
\newcommand{\e}{\mathrm{e}}
\newcommand{\wh}{\widehat}
\newcommand{\wt}{\widetilde}
\newcommand{\norm}[2]{\left\| #1 \right\|_{#2}}
\newcommand{\abs}[1]{\left| #1 \right|}
\newcommand{\innerprod}[2]{\left\langle #1,  #2 \right\rangle}
\newcommand{\paren}[1]{\left( #1 \right)}
\newcommand{\brac}[1]{\left[ #1 \right]}
\newcommand{\Brac}[1]{\left\{ #1 \right\}}
\newcommand{\moff}{m_\mr{off}}
\newcommand{\mon}{m_\mr{on}}
\newcommand{\EmpiricalOffline}{\wh{\bb E}_{\mc D^\nu_h}}
\newcommand{\EmpiricalOnline}{\wh{\bb E}_{\mc D^\tau_h}}
\newcommand{\Deltaoff}{\Delta_\mr{off}}
\newcommand{\Deltaon}{\Delta_\mr{on}}
\newcommand{\Vmax}{V_{\max}}
\newcommand{\regret}{\mr{Reg}}
\newcommand{\mr}{\mathrm}
\numberwithin{equation}{section}
\title{\methodname: Calibrated Offline RL Pre-Training for Efficient Online Fine-Tuning}
\author{Mitsuhiko Nakamoto$^{1}$\thanks{Equal contributions. Corresponding authors: Mitsuhiko Nakamoto, Yuexiang Zhai, and Aviral Kumar (\{nakamoto, simonzhai\}@berkeley.edu, aviralku@andrew.cmu.edu)}  \: Yuexiang Zhai$^{1\ast}$
  \: Anikait Singh$^{1}$ \: Max Sobol Mark$^{2}$ \AND \: Yi Ma$^{1}$ \:   Chelsea Finn$^{2}$ \: Aviral Kumar$^{1}$ \: Sergey Levine$^{1}$  \AND \normalfont $^{1}$UC Berkeley \: $^{2}$Stanford University}
\begin{document}

\maketitle

\begin{abstract}
A compelling use case of offline reinforcement learning (RL) is to obtain a policy initialization from existing datasets followed by fast online fine-tuning with limited interaction. However, existing offline RL methods tend to behave poorly during fine-tuning. In this paper, we study the fine-tuning problem in the context of conservative offline RL methods and we devise an approach for learning an effective initialization from offline data that also enables fast online fine-tuning capabilities. Our approach, calibrated Q-learning (\methodname), accomplishes this by learning a conservative value function initialization that underestimates the value of the learned policy from offline data, while also ensuring that the learned Q-values are at a reasonable scale. We refer to this property as calibration, and define it formally as providing a lower bound on the true value function of the learned policy and an upper bound on the value of some other (suboptimal) reference policy, which may simply be the behavior policy. We show that a conservative offline RL algorithm that also learns a calibrated value function leads to effective online fine-tuning, enabling us to take the benefits of offline initializations in online fine-tuning. In practice, \methodname\ can be implemented on top of the conservative Q learning (CQL)~\cite{kumar2020conservative} for offline RL within a one-line code change. Empirically, \methodname\ outperforms state-of-the-art methods on {\bf 9/11} fine-tuning benchmark tasks that we study in this paper. Code and video are available at \url{https://nakamotoo.github.io/Cal-QL}

\end{abstract}
\vspace{-0.2cm}
\section{Introduction}
\label{Introduction}
\vspace{-0.2cm}

Modern machine learning successes follow a common recipe: pre-training models on general-purpose, Internet-scale data, followed by fine-tuning the pre-trained initialization on a limited amount of data for the task of interest~\cite{he2022masked,devlin2018bert}. How can we translate such a recipe to sequential decision-making problems? A natural way to instantiate this paradigm is to utilize offline reinforcement learning (RL)~\cite{levine2020offline} for initializing value functions and policies from static datasets, followed by online fine-tuning to improve this initialization with limited active interaction. If successful, such a recipe might enable efficient online RL with much fewer samples than current methods that learn from scratch.

Many algorithms for offline RL have been applied to online fine-tuning. Empirical results across such works suggest a counter-intuitive trend: policy initializations obtained from more effective offline RL methods tend to exhibit worse online fine-tuning performance, even within the same task (see Table 2 of~\citet{kostrikov2021offline} \& Figure 4 of~\citet{xiao2023the}). On the other end, online RL methods training from scratch (or RL from demonstrations~\cite{vecerik2017leveraging},
where the replay buffer is seeded with the offline data) seem to improve online at a significantly faster rate. However, these online methods require actively collecting data by rolling out policies from scratch, which inherits similar limitations to na\"ive online RL methods in problems where data collection is expensive or dangerous. Overall, these results suggest that it is challenging to devise an offline RL algorithm that both acquires a good initialization from prior data and also enables efficient fine-tuning.
\begin{wrapfigure}{r}{0.5\columnwidth}
\begin{center}
{\includegraphics[width=0.85\linewidth]{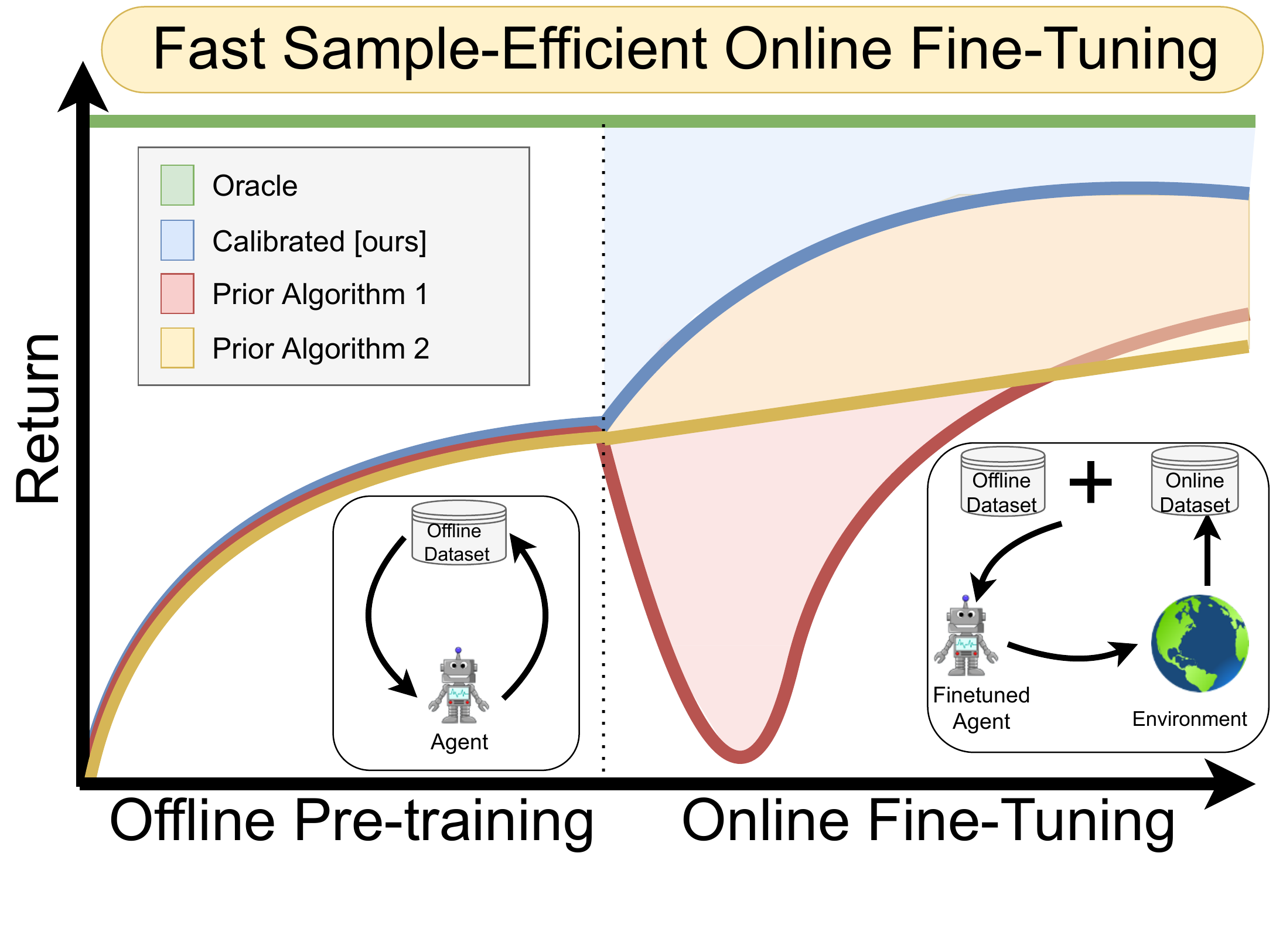}}
\caption{
\footnotesize{We study \textbf{offline RL pre-training followed by online RL fine-tuning}. Some prior offline RL methods tend to exhibit slow performance improvement in this setting (yellow), resulting in worse asymptotic performance. Others suffer from initial performance degradation once online fine-tuning begins (red), resulting in a high cumulative regret. We develop an approach that ``\emph{calibrates}'' the learned value function to attain a fast improvement with a smaller regret (blue).}}
\label{fig:teaser}
\vspace{-0.4cm}
\end{center}
\end{wrapfigure}

How can we devise a method to learn an effective policy initialization that also improves during fine-tuning? Prior work~\cite{kumar2020conservative,cheng2022adversarially} shows that one can learn a good offline initialization by optimizing the policy against a \emph{conservative} value function obtained from an offline dataset. But, as we show in Section~\ref{sec:empirical_analysis}, conservatism alone is insufficient for efficient online fine-tuning. Conservative methods often tend to ``unlearn'' the policy initialization learned from offline data and waste samples collected via online interaction in recovering this initialization. We find that the ``unlearning'' phenomenon is a consequence of the fact that value estimates produced via conservative methods can be significantly lower than the ground-truth return of \emph{any} valid policy. Having Q-value estimates that do not lie on a similar scale as the return of a valid policy is problematic. Because once fine-tuning begins, actions executed in the environment for exploration that are actually worse than the policy learned from offline data could erroneously appear better, if their ground-truth return value is larger than the learned conservative value estimate. Hence, subsequent policy optimization will degrade the policy performance until the method recovers.  

If we can ensure that the conservative value estimates learned using the offline data are \emph{calibrated}, meaning that these estimates are on a similar scale as the true return values, then we can avoid the unlearning phenomenon caused by conservative methods (see the formal definition in~\ref{cond:calibration}). Of course, we cannot enforce such a condition perfectly, since it would require eliminating all errors in the value function. Instead, we devise a method for ensuring that the learned values upper bound the true values of some \emph{reference policy} whose values can be estimated more easily (e.g., the behavior policy), while still lower bounding the values of the learned policy. Though this does not perfectly ensure that the learned values are correct, we show that it still leads to sample-efficient online fine-tuning. Thus, our practical method, \textbf{calibrated Q-learning} \textbf{(\methodname)}, learns conservative value functions that are ``calibrated'' against the behavior policy, via a simple modification to existing conservative methods.

The main contribution of this paper is \methodname, a method for acquiring an offline initialization that facilitates online fine-tuning. \methodname\ aims to learn conservative value functions that are calibrated with respect to a reference policy (e.g., the behavior policy). Our analysis of \methodname\ shows that \methodname\ attains stronger guarantees on cumulative regret during fine-tuning. In practice, \methodname\ can be implemented on top of conservative Q-learning~\cite{kumar2020conservative}, a prior offline RL method, without any additional hyperparameters. We evaluate \methodname\ across a range of benchmark tasks from \cite{fu2020d4rl}, \cite{singh2020cog} and \cite{AWAC}, including robotic manipulation and navigation. We show that \methodname\ matches or outperforms the best methods on all tasks, in some cases by 30-40\%.

\vspace{-0.2cm}

\vspace{-0.2cm}
\section{Related Work}
\vspace{-0.2cm}
Several prior works suggest that online RL methods typically require a large number of samples~\cite{silver2016mastering,vinyals2019grandmaster,ye2020towards,kakade2002approximately,zhai2022computational,gupta2022unpacking,li2022understanding} to learn from scratch. We can utilize offline data to accelerate online RL algorithms. Prior works do this in a variety of ways: incorporating the offline data into the replay buffer of online RL~\cite{schaal1996learning,vecerik2017leveraging,hester2018deep,song2023hybrid}, utilizing auxiliary behavioral cloning losses with policy gradients~\cite{rajeswaran2017learning,kang2018policy,zhu2018reinforcement,zhu2019dexterous}, or extracting a high-level skill space for downstream online RL~\cite{gupta2019relay,ajay2020opal}. While these methods improve the sample efficiency of online RL from scratch, as we will also show in our results, they do not eliminate the need to actively roll out poor policies for data collection.

To address this issue, a different line of work first runs offline RL for learning a good policy and value initialization from the offline data, followed by online fine-tuning~\cite{nair2020accelerating,kostrikov2021offlineb,lyu2022mildly,beeson2022improving,wu2022supported,lee2022offline,mark2022fine}. These approaches typically employ offline RL methods based on policy constraints or pessimism~\cite{fujimoto2018off,siegel2020keep,guo2020batch,ghasemipour2021emaq,kostrikov2021offlineb,singh2020cog,lee2022offline} on the offline data, then continue training with the same method on a combination of offline and online data once fine-tuning begins~\cite{nachum2019algaedice,kidambi2020morel,yu2020mopo,kumar2020conservative,buckman2020importance}. Although pessimism is crucial for offline RL~\cite{jin2021pessimism,cheng2022adversarially}, using pessimism or constraints for fine-tuning~\cite{nair2020accelerating,kostrikov2021offlineb,lyu2022mildly} slows down fine-tuning or leads to initial unlearning, as we will show in Section~\ref{sec:empirical_analysis}. In effect, these prior methods either fail to improve as fast as online RL or lose the initialization from offline RL. We aim to address this limitation by understanding some conditions on the offline initialization that enable fast fine-tuning. 

Our work is most related to methods that utilize a pessimistic RL algorithm for offline training but incorporate exploration in fine-tuning~\cite{lee2022offline,mark2022fine,wu2022supported}. In contrast to these works, our method aims to learn a better offline initialization that enables standard online fine-tuning. Our approach fine-tunes na\"ively without ensembles~\cite{lee2022offline} or exploration~\cite{mark2022fine} and, as we show in our experiments, this alone is enough to outperform approaches that employ explicit optimism during data collection.
\vspace{-0.2cm}
\section{Preliminaries and Background}
\vspace{-0.2cm}
The goal in RL is to learn the optimal policy for an MDP $\mathcal{M} = (\mathcal{S}, \mathcal{A}, P, r, \rho, \gamma)$. $\mathcal{S}, \mathcal{A}$ denote the state and action spaces.  $P(s' | s, a)$ and $r(s,a)$
are the dynamics and reward functions. $\rho(s)$ denotes the initial state distribution.  $\gamma \in (0,1)$ denotes the discount factor. Formally, the goal is to learn a policy $\pi:\mc S\mapsto \mc A$ that maximizes cumulative discounted value function, denoted by $V^\pi(s) = {\frac{1}{1-\gamma}\sum_{t} \bb E_{a_t \sim \pi(s_t)}\brac{\gamma^t r(s_t, a_t)|s_0=s}}$. The Q-function of a given policy $\pi$ is defined as ${Q^\pi(s,a) = {\frac{1}{1-\gamma}\sum_{t} \bb E_{a_t \sim \pi(s_t)}\brac{\gamma^t r(s_t, a_t)|s_0=s,a_0=a}}}$, and we use $Q_\theta^\pi$ to denote the estimate of the Q-function of a policy $\pi$ as obtained via a neural network with parameters $\theta$.

Given access to an offline dataset $\mathcal{D} = \{(s, a, r, s^\prime)\}$ collected using a behavior policy $\behavior$, we aim to first train a good policy and value function using the offline dataset $\mathcal{D}$ alone, followed by an online phase that utilizes online interaction in $\mathcal{M}$. Our goal during fine-tuning is to obtain the optimal policy with the smallest number of online samples. This can be expressed as minimizing the \textbf{cumulative regret} over rounds of online interaction: $\regret(K) := \bb E_{s\sim \rho}\sum_{k=1}^K\brac{V^{\star}(s) - V^{\pi^k}(s)}$. As we demonstrate in Section~\ref{sec:experiments}, existing methods face challenges in this setting.

Our approach will build on the conservative Q-learning (CQL)~\cite{kumar2020conservative} algorithm.
CQL imposes an additional regularizer that penalizes the learned Q-function on out-of-distribution (OOD) actions while compensating for this pessimism on actions seen within the training dataset. Assuming that the value function is represented by a function, $Q_\theta$, the training objective of CQL is given by
\begin{align}
    \label{eqn:cql_training}
    \!\!\!\!\!\min_\theta {\color[HTML]{215bab} {\alpha \underbrace{\left(\mathbb{E}_{s \sim \mathcal{D}, a \sim \pi} \left[Q_\theta(s,a)\right] - \mathbb{E}_{s, a \sim \mathcal{D}}\left[Q_\theta(s,a)\right]\right)}_{\text{Conservative regularizer }\mathcal{R}(\theta)}}} + \frac{1}{2} {\mathbb{E}_{s, a, s^\prime\sim \mathcal{D}}\left[\left(Q_\theta(s, a) - \bellman^\pi\bar{Q}(s, a)\right)^2 \right]},
\end{align}
where $\bellman^\pi \bar{Q} (s, a)$ is the backup operator applied to a delayed target Q-network, $\bar{Q}$: $\bellman^\policy \bar{Q}(s, a) := r(s, a) + \gamma \E_{a^\prime \sim \pi(a^\prime|s^\prime)}[\bar{Q}(s^\prime, a^\prime)]$. The second term is the standard TD error~\cite{lillicrap2015continuous,fujimoto2018addressing,haarnoja2018sacapps}. The first term  $\mathcal{R}(\theta)$ (in {\color[HTML]{215bab} blue}) is a conservative regularizer that aims to prevent overestimation in the Q-values for OOD actions by minimizing the Q-values under the policy $\pi(\ba|\bs)$, and counterbalances by maximizing the Q-values of the actions in the dataset following the behavior policy $\pi_\beta$.

\vspace{-0.2cm}
\section{When Can Offline RL Initializations Enable Fast Online Fine-Tuning?}
\label{sec:empirical}

A starting point for offline pre-training and online fine-tuning is to simply initialize the value function with one that is produced by an existing offline RL method and then perform fine-tuning. However, we empirically find that initializations learned by many offline RL algorithms can perform poorly during fine-tuning. We will study the reasons for this poor performance for the subset of conservative methods to motivate and develop our approach for online fine-tuning, calibrated Q-learning. 

\subsection{Empirical Analysis}
\label{sec:empirical_analysis}

Offline RL followed by online fine-tuning typically poses non-trivial challenges for a variety of methods. While analysis in prior work~\citep{nair2020accelerating} notes challenges for a subset of offline RL methods, in Figure~\ref{fig:cql_iql_finetune}, we evaluate the fine-tuning performance of a variety of prior offline RL methods (CQL~\citep{kumar2020conservative}, IQL~\citep{kostrikov2021offlineb}, TD3+BC~\citep{fujimoto2021minimalist}, AWAC~\citep{nair2020accelerating}) on a particular diagnostic instance of a visual pick-and-place task with a distractor object and sparse binary rewards~\citep{singh2020cog}, and find that all methods struggle to attain the best possible performance, quickly. More details about this task are in Appendix~\ref{appendix:env_details}. 
\begin{wrapfigure}{r}{0.38\columnwidth}
\begin{center}
{\includegraphics[width=0.78\linewidth]{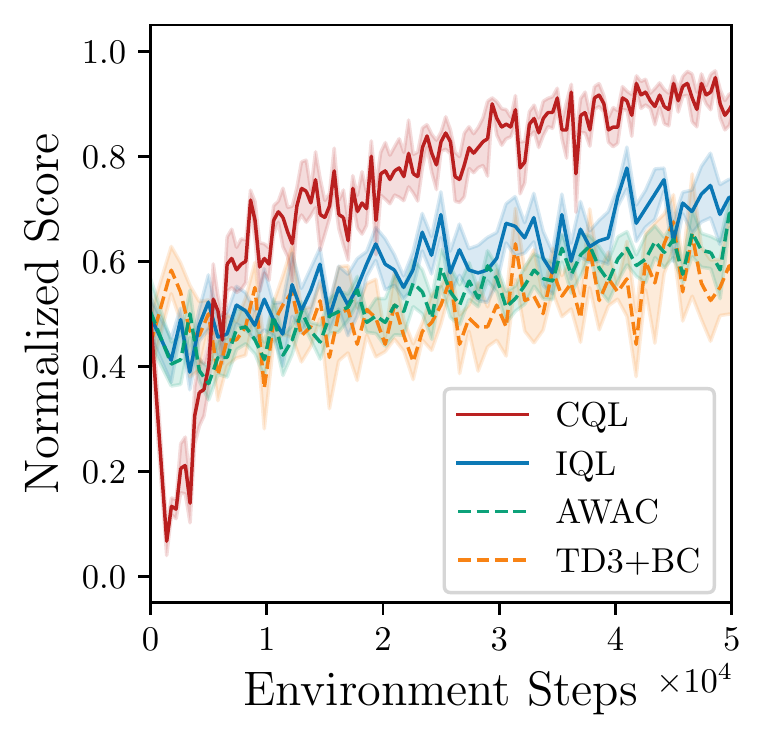}}
\caption{\label{fig:cql_iql_finetune}\footnotesize{\textbf{Multiple prior offline RL algorithms suffer from difficulties} during fine-tuning including poor asymptotic performance and initial unlearning.}}
\end{center}
\end{wrapfigure}

While the offline Q-function initialization obtained from all methods attains a similar (normalized) return of around 0.5, they suffer from difficulties during fine-tuning: TD3+BC, IQL, AWAC attain slow asymptotic performance and CQL unlearns the offline initialization, followed by spending a large amount of online interaction to recover the offline performance again, before any further improvement. This initial unlearning appears in multiple tasks as we show in Appendix~\ref{app:cql_dip_zoom_in}. In this work, we focus on developing effective fine-tuning strategies on top of conservative methods like CQL. To do so, we next aim to understand the potential reason behind the initial unlearning in CQL.  

\begin{figure}
\vspace{-0.42cm}
\begin{center}
\centerline{\includegraphics[width=0.70\linewidth]{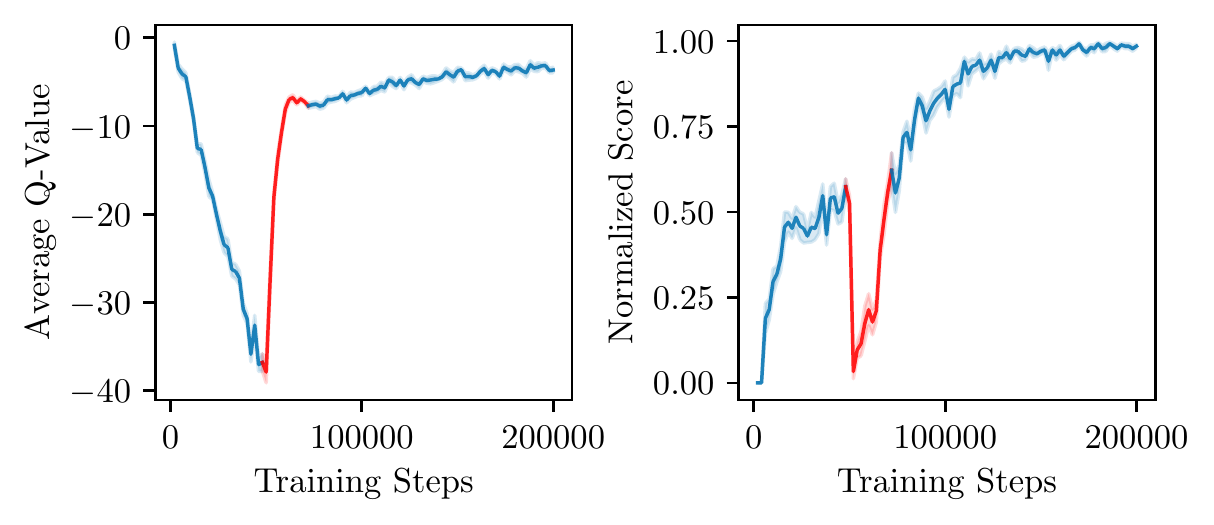}}
\vspace{-0.25cm}
\caption{\label{fig:cql_q_value} \footnotesize{\textbf{The evolution of the average Q-value and the success rate of CQL over the course of offline pre-training and online fine-tuning.} Fine-tuning begins at 50K steps. The red-colored part denotes the period of performance recovery which also coincides with the period of Q-value adjustment.}}
\end{center}
\vspace{-1.3cm}
\end{figure}
\textbf{Why does CQL unlearn initially?} To understand why CQL unlearns initially, we inspect the learned Q-values averaged over the dataset in Figure~\ref{fig:cql_q_value}. Observe that the Q-values learned by CQL in the offline phase are \emph{much} smaller than their ground-truth value (as expected), but these Q-values drastically jump and adjust in scale when fine-tuning begins. In fact, we observe that performance recovery (red segment in Figure~\ref{fig:cql_q_value}) {\em coincides} with a period where the range of Q-values changes to match the true range. This is as expected: as a conservative Q-function experiences new online data, actions much worse than the offline policy on the rollout states appear to attain higher rewards compared to the highly underestimated offline Q-function, which in turn deceives the policy optimizer into unlearning the initial policy. We illustrate this idea visually in Figure~\ref{fig:calql_idea}. Once the Q-function has adjusted and the range of Q-values closely matches the true range, then fine-tuning can proceed normally, after the dip.

\textbf{To summarize,} our empirical analysis indicates that methods existing fine-tuning methods suffer from difficulties such as initial unlearning or poor asymptotic performance. In particular, we observed that conservative methods can attain good asymptotic performance, but ``waste'' samples to correct the learned Q-function. Thus, in this paper, we attempt to develop a good fine-tuning method that builds on top of an existing conservative offline RL method, CQL, but aims to ``calibrate'' the Q-function so that the initial dip in performance can be avoided. 

\vspace{-0.2cm}
\subsection{Conditions on the Offline Initialization that Enable Fast Fine-Tuning}
\vspace{-0.18cm}
Our observations from the preceding discussion motivate two conclusions in regard to the offline Q-initialization for fast fine-tuning: \textbf{(a)} methods that learn \textbf{conservative} Q-functions can attain good asymptotic performance, and \textbf{(b)} if the learned Q-values closely match the range of ground-truth Q-values on the task, then online fine-tuning does not need to devote samples to unlearn and then recover the offline initialization. One approach to formalize this intuition of Q-values lying on a similar scale as the ground-truth Q-function is via the requirement that the conservative Q-values learned by the conservative offline RL method must be lower-bounded by the ground-truth Q-value of a sub-optimal reference policy. This will prevent conservatism from learning overly small Q-values. We will refer to this property as ``calibration'' with respect to the reference policy.
\begin{definition}[Calibration]
\label{cond:calibration}
An estimated Q-function ${Q}_\theta^\pi$ for a given policy $\pi$ is said to be calibrated with respect to a reference policy $\mu$ if $\mathbb{E}_{\ba \sim \pi}\left[Q_\theta^\pi(\bs, \ba)\right] \geq \mathbb{E}_{\ba \sim \mu}\left[Q^\mu(\bs, \ba)\right] := V^\mu(s), \forall s \in \mc \mathcal{D}$.
\end{definition}

If the learned Q-function ${Q}^\pi_\theta$ is calibrated with respect to a policy $\mu$ that is worse than $\pi$, it would prevent unlearning during fine-tuning that we observed in the case of CQL.
This is because the policy optimizer would not unlearn $\pi$ in favor of a policy that is worse than the reference policy $\mu$ upon observing new online data as the expected value of $\pi$ is constrained to be larger than $V^\mu$: $\mathbb{E}_{\ba \sim \pi}\left[{Q}^\pi_\theta(\bs, \ba)\right] \geq V^\mu(\bs)$.
Our practical approach \methodname\ will enforce calibration with respect to a policy $\mu$ whose ground-truth value, $V^\mu(\bs)$, can be estimated reliably without bootstrapping error (e.g., the behavior policy induced by the dataset). This is the key idea behind our method (as we will discuss next) and is visually illustrated in Figure~\ref{fig:calql_idea}.

\vspace{-0.2cm}
\vspace{-0.2cm}
\section{\methodname: Calibrated Q-Learning}
\label{sec:empirical-method}
\vspace{-0.2cm}
Our approach, calibrated Q-learning (\methodname) aims to learn a conservative and calibrated value function initializations from an offline dataset. To this end, \methodname\ builds on CQL~\citep{kumar2020conservative} and then constrains the learned Q-function to produce Q-values larger than the Q-value of a reference policy $\mu$ per Definition~\ref{cond:calibration}. In principle, our approach can utilize many different choices of reference policies, but for developing a practical method, we simply utilize the behavior policy as our reference policy.

\textbf{Calibrating CQL.} We can constrain the learned Q-function $Q^\pi_\theta$ to be larger than $V^\mu$ via a simple change to the CQL training objective shown in Equation~\ref{eqn:cql_training}: masking out the push down of the learned Q-value on out-of-distribution (OOD) actions in CQL if the Q-function is not calibrated, i.e., if $\mathbb{E}_{a \sim \pi}\left[Q^\pi_\theta(\bs, \ba)\right] \leq V^\mu(\bs)$. \methodname\ modifies the CQL regularizer, $\mathcal{R}(\theta)$ in this manner: 
\begin{align}
\label{eqn:cal_ql_training}
\!\!\!\!\!\!\mathbb{E}_{\bs \sim \mathcal{D}, \ba \sim \pi} \brac{{\color[HTML]{981e20}{\max \left( Q_\theta(s,a), V^\mu(s) \right)}} } - \mathbb{E}_{\bs, \ba \sim \mathcal{D}}\left[Q_\theta(\bs,\ba)\right],
\end{align}
where the changes from standard CQL are depicted in {\color[HTML]{981e20} red}. As long as $\alpha$ (in Equation~\ref{eqn:cql_training}) is large, for any state-action pair where the learned Q-value is smaller than $Q^\mu$, the Q-function in Equation~\ref{eqn:cal_ql_training} will upper bound $Q^\mu$ in a tabular setting. Of course, as with any practical RL method, with function approximators and gradient-based optimizers, we cannot guarantee that we can enforce this condition for every state-action pair, but in our experiments, we find that Equation~\ref{eqn:cal_ql_training} is sufficient to enforce the calibration in expectation over the states in the dataset.         

\begin{figure}[b]
\centering
\vspace{-0.4cm}
\includegraphics[trim={0 0 2.7cm 0},clip,width=0.98\linewidth]{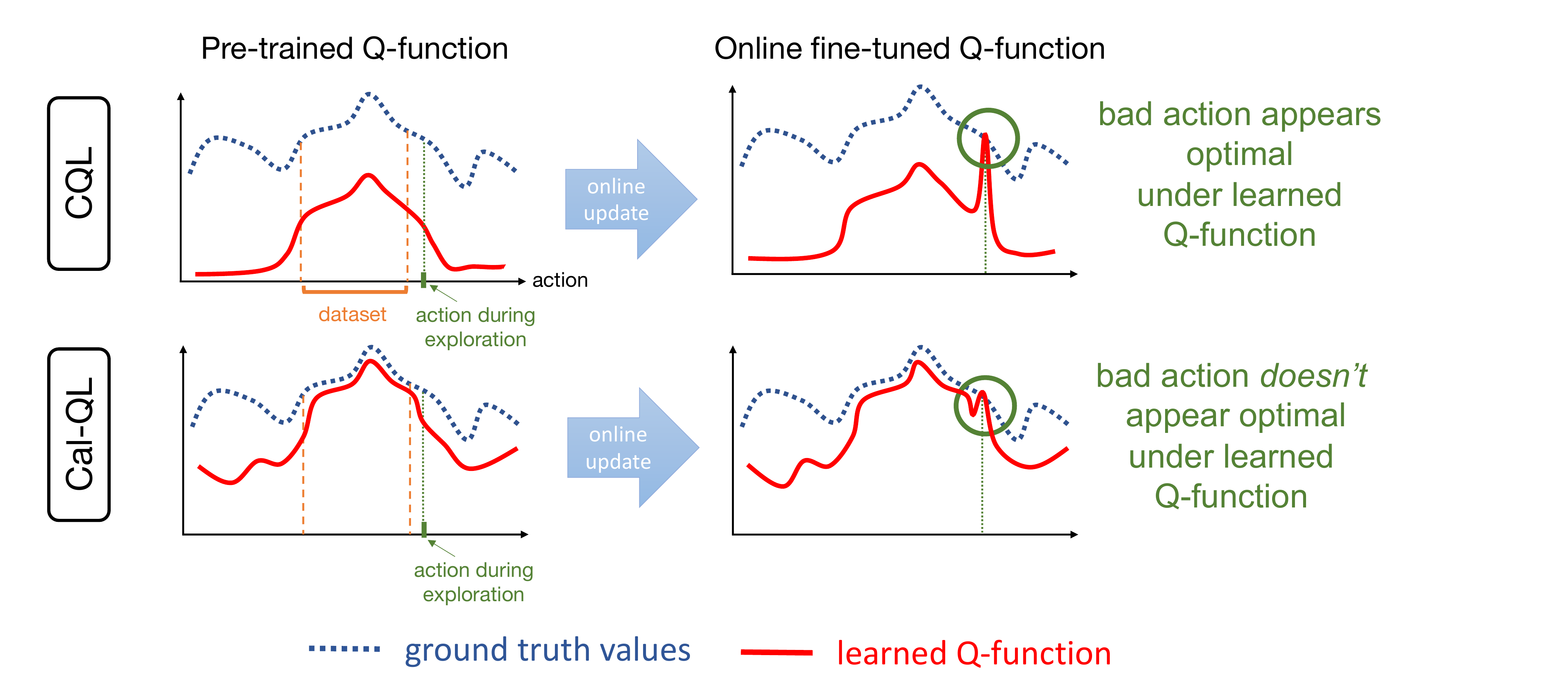}
\vspace{-0.2cm}
\caption{
\footnotesize{\textbf{Intuition behind policy unlearning with CQL and the idea behind \methodname.} The plot visualizes a slice of the learned Q-function and the ground-truth values for a given state. Erroneous peaks on suboptimal actions (x-axis) arise when updating CQL Q-functions with online data. This in turn can lead the policy to deviate away from high-reward actions covered by the dataset in favor of erroneous new actions, resulting in deterioration of the pre-trained policy. In contrast, \methodname\ corrects the scale of the learned Q-values by using a reference value function, such that actions with worse Q-values than the reference value function do not erroneously appear optimal in fine-tuning.}}
\label{fig:calql_idea}
\vspace{-0.4cm}
\end{figure}

\noindent \textbf{Pseudo-code and implementation details.} Our implementation of \methodname\ directly builds on the implementation of CQL from \citet{geng2022jaxcql}. We present a pseudo-code for \methodname\ in Appendix~\ref{app:imp_details}. Additionally, we list the hyperparameters $\alpha$ for the CQL algorithm and our baselines for each suite of tasks in Appendix \ref{app:hyperparam}. Following the protocol in prior work~\citep{kostrikov2021offlineb,song2023hybrid}, the practical implementation of \methodname\ trains on a mixture of the offline data and the new online data, weighted in some proportion during fine-tuning. To get $V^\mu(\bs)$, we can fit a function approximator $Q^\mu_\theta$ or $V^\mu_\theta$ to the return-to-go values via regression, but we observed that also simply utilizing the return-to-go estimates for tasks that end in a terminal was sufficient for our use case. We show in  Section~\ref{sec:experiments}, how this simple \emph{one-line} change to the objective drastically improves over prior fine-tuning results.

\vspace{-0.2cm}
\vspace{-0.2cm}
\section{Theoretical Analysis of \methodname}
\label{sec:theory}
\vspace{-0.2cm}

We will now analyze the cumulative regret attained over online fine-tuning, when the value function is pre-trained with Cal-QL, and show that enforcing calibration (Defintion~\ref{cond:calibration}) leads to a favorable regret bound during the online phase. Our analysis utilizes tools from~\citet{song2023hybrid}, but studies the impact of calibration on fine-tuning. We also remark that we simplify the treatment of certain aspects (e.g., how to incorporate pessimism) as it allows us to cleanly demonstrate the benefits of calibration.  

{\textbf{Notation \& terminology.}} In our analysis, we will consider an idealized version of \methodname\ for simplicity. Specifically, following prior work~\citep{song2023hybrid} under the bilinear model~\citep{du2021bilinear}, we will operate in a finite-horizon setting with a horizon $H$. We denote the learned Q-function at each learning iteration $k$ for a given $(\bs, \ba)$ pair and time-step $h$ by $Q_{\theta}^k(\bs, \ba)$. For any given policy $\pi$, let $C_\pi\geq1$ denote the concentrability coefficient such that $C_\pi:=\max_{f\in \mc C}\frac{\sum_{h=0}^{H-1}\bb E_{s,a\sim d_h^\pi}[\mc T f_{h+1}(s,a)-f_h(s,a)]}{\sqrt{\sum_{h=0}^{H-1}\bb E_{s,a\sim \nu_h}(\mc T f_{h+1}(s,a)-f_h(s,a))^2}}$,
i.e., a coefficient that quantifies the distribution shift between the policy $\pi$ and the dataset $\mathcal{D}$, in terms of the ratio of Bellman errors averaged under $\pi$ and the dataset $\mathcal{D}$. Note that $\mc C$ represents the Q-function class and we assume $\mc C$ has a bellman-bilinear rank~\citep{du2021bilinear} of $d$. We also use $C_\pi^\mu$ to denote the concentrability coefficient over a subset of {\em calibrated} Q-functions w.r.t. a reference policy $\mu$: $C^\mu_\pi:=\max_{f\in \mc C,f(s,a)\geq Q^\mu(s,a)}\frac{\sum_{h=0}^{H-1}\bb E_{s,a\sim d_h^\pi}[\mc T f_{h+1}(s,a)-f_h(s,a)]}{\sqrt{\sum_{h=0}^{H-1}\bb E_{s,a\sim \nu_h}(\mc T f_{h+1}(s,a)-f_h(s,a))^2}}$, which provides $C^\mu_\pi\leq C_\pi$. Similar to $\mc C$, let $d_\mu$ denote the bellman bilinear rank of $\mc C_\mu$ -- 
the calibrated Q-function class w.r.t. the reference policy $\mu$.
Intuitively, we have $\mc C_\mu\subset\mc C$, which implies that $d_\mu\leq d$. The formal definitions are provided in Appendix~\ref{appendix:notations}.
We will use $\pi^k$ to denote the arg-max policy induced by $Q^k_\theta$.

{\textbf{Intuition.}} We intuitively discuss how calibration and conservatism enable \methodname\ to attain a smaller regret compared to not imposing calibration. Our goal is to bound the cumulative regret of online fine-tuning, ${\sum_{k} \mathbb{E}_{\bs_0 \sim \rho}[V^{\pi^\star}(\bs_0) - V^{\pi^k}(\bs_0)]}$. We can decompose this expression into two terms: 
\vspace{-0.05cm}
\begin{align}
    \resizebox{.87\textwidth}{!}{$\regret(K) = \underbrace{\sum_{k=1}^K \mathbb{E}_{\bs_0 \sim \rho} \brac{V^{\star}(\bs_0) -\max_a {Q}_{\theta}^k(\bs_0,\ba)}}_{(i) ~:=~ \text{miscalibration}}
    +\underbrace{\sum_{k=1}^K \mathbb{E}_{\bs_0 \sim \rho} \brac{ \max_a {Q}_{\theta}^k(\bs_0,\ba) - V^{\pi^{k}}(\bs_0)}}_{(ii) ~:=~ \text{overestimation}}$.}
\label{eq:regret-decomposition}
\end{align}
This decomposition of regret into terms (i) and (ii) is instructive. Term (ii) corresponds to the amount of over-estimation in the learned value function, which is expected to be small if a conservative RL algorithm is used for training. Term (i) is the difference between the ground-truth value of the optimal policy and the learned Q-function and is negative if the learned Q-function were calibrated against the optimal policy (per Definition~\ref{cond:calibration}). Of course, this is not always possible because we do not know $V^\star$ a priori. But note that when \methodname\ utilizes a reference policy $\mu$ with a high value $V^\mu$, close to $V^\star$, then the learned Q-function $Q_\theta$ is calibrated with respect to $Q^\mu$ per Condition~\ref{cond:calibration} and term (i) can still be controlled. Therefore, controlling this regret requires striking a balance between learning a calibrated (term (i)) and conservative (term (ii)) Q-function. We now formalize this intuition and defer the detailed proof to Appendix~\ref{appdendix:derivation-CalQL}.

\begin{theorem}[Informal regret bound of \methodname]
\label{thm:main-thm-informal}
    With high probability, \methodname{} obtains the following bound on total regret accumulated during online fine-tuning: \vspace{-0.15cm}
\begin{equation*}
    \begin{split}
        \regret(K) = \wt{O}\Big(\min\big\{C_{\pi^\star}^\mu H\sqrt{dK\log\paren{|\mc F|}},
        \;K\mathbb{E}_{\rho}[V^{\star}(\bs_0) -V^\mu(\bs_0)]+H\sqrt{d_\mu K\log\paren{|\mc F|}}\big\}\Big),
    \end{split}
\end{equation*}
where $\mc F$ is the functional class of the Q-function.
\end{theorem}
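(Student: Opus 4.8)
The plan is to start from the regret decomposition in Equation~\ref{eq:regret-decomposition}, $\regret(K) = (i) + (ii)$, and bound the \emph{miscalibration} term (i) and the \emph{overestimation} term (ii) separately; the two arguments of the $\min$ will come from two distinct ways of controlling term (i). Throughout I would work in the finite-horizon bilinear model of~\citet{du2021bilinear} and import the per-iteration fitted-$Q$/least-squares guarantees from the hybrid-RL analysis of~\citet{song2023hybrid}. The single workhorse lemma to invoke (or re-derive) is a high-probability bound on the cumulative Bellman error of the iterates $\{Q_\theta^k\}$: the per-iteration in-sample Bellman residual $\sqrt{\sum_{h}\mathbb{E}_{\nu_h}(\mc T Q^k_{h+1}-Q^k_h)^2}$ is controlled by regression over the function class, and an elliptical-potential argument over the bilinear factorization converts the sum over $K$ iterations into a bound scaling like $H\sqrt{dK\log|\mc F|}$, with the rank $d$ replaced by $d_\mu$ when the analysis is confined to the calibrated class $\mc C_\mu$.

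Term (ii), $\sum_k \mathbb{E}_\rho[\max_\ba Q_\theta^k(\bs_0,\ba) - V^{\pi^k}(\bs_0)]$, is handled first. Since $\pi^k$ is the greedy policy of $Q_\theta^k$, telescoping $\max_\ba Q_\theta^k - V^{\pi^k}$ along rollouts of $\pi^k$ writes this as a sum of on-policy Bellman errors under the occupancy $d^{\pi^k}$; because the algorithm collects fresh online data by executing exactly $\pi^k$, these residuals are ``in-sample'' and need no concentrability transfer. Applying the cumulative-Bellman-error lemma then gives $(ii) = \wt O(H\sqrt{d_\bullet K\log|\mc F|})$, where $d_\bullet$ is the bilinear rank of the operative class and equals $d_\mu$ once the iterates are taken to lie in $\mc C_\mu$.

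Term (i), $\sum_k \mathbb{E}_\rho[V^{\star}(\bs_0) - \max_\ba Q_\theta^k(\bs_0,\ba)]$, is where the two branches diverge. The first branch (the $C_{\pi^\star}^\mu$ bound) lower-bounds $\max_\ba Q_\theta^k(\bs_0,\ba) \ge Q_\theta^k(\bs_0,\pi^\star(\bs_0))$, telescopes $V^\star - Q_\theta^k(\cdot,\pi^\star)$ into Bellman residuals under the optimal-policy occupancy $d^{\pi^\star}$, and transfers them to the offline distribution $\nu$ at the cost of the calibrated concentrability coefficient $C_{\pi^\star}^\mu$; this step is essentially by the definition of $C_{\pi^\star}^\mu$ and is legitimate because all Cal-QL iterates are calibrated w.r.t.\ $\mu$, so the relevant functions lie in the set over which $C_{\pi^\star}^\mu \le C_{\pi^\star}$ is taken. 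Invoking the offline Bellman-error bound over the generic class (rank $d$) then yields $(i) = \wt O(C_{\pi^\star}^\mu H\sqrt{dK\log|\mc F|})$, which already dominates term (ii). The second branch (the value-gap bound) uses calibration (Definition~\ref{cond:calibration}) directly: since $\max_\ba Q_\theta^k(\bs_0,\ba) \ge \mathbb{E}_{\ba\sim\pi^k}[Q_\theta^k(\bs_0,\ba)] \ge V^\mu(\bs_0)$, we immediately obtain $V^\star(\bs_0) - \max_\ba Q_\theta^k(\bs_0,\ba) \le V^\star(\bs_0) - V^\mu(\bs_0)$, hence $(i) \le K\,\mathbb{E}_\rho[V^\star(\bs_0)-V^\mu(\bs_0)]$ with no Bellman-error or concentrability cost, and here the analysis stays inside $\mc C_\mu$ so term (ii) contributes only $H\sqrt{d_\mu K\log|\mc F|}$.

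Combining the branches gives $\regret(K)=\wt O(C_{\pi^\star}^\mu H\sqrt{dK\log|\mc F|})$ and $\regret(K)=\wt O\!\big(K\mathbb{E}_\rho[V^\star-V^\mu] + H\sqrt{d_\mu K\log|\mc F|}\big)$ respectively; since both hold with high probability, taking the minimum yields the claimed bound. I expect the main obstacle to be the cumulative-Bellman-error control through the bilinear model while respecting the calibration constraint: verifying that the $\max(Q_\theta,V^\mu)$ masking of Equation~\ref{eqn:cal_ql_training} leaves the fitted-$Q$ regression guarantees of~\citet{song2023hybrid} intact, and rigorously establishing that confining the analysis to $\mc C_\mu$ simultaneously shrinks the concentrability ($C_{\pi^\star}^\mu \le C_{\pi^\star}$) and the bilinear rank ($d_\mu \le d$) entering the elliptical-potential step. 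The hybrid offline-plus-on-policy bookkeeping needed to make the high-probability concentration hold uniformly over all $K$ iterations, together with pinning down exactly which rank ($d$ versus $d_\mu$) is admissible in each branch, is the other delicate piece I would have to treat carefully.
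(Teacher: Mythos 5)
Your proposal reuses the paper's toolkit --- the two performance-difference lemmas, the offline/online least-squares generalization bound, and the bilinear-model elliptical-potential argument --- but its architecture differs from the paper's proof in a way that matters. The paper does \emph{not} prove the two elements of the $\min$ as two bounds each valid over all $K$ iterations. Instead, its formal proof (Theorem~\ref{thm:CalQL-regret-main}) splits the iterations according to whether the calibration clip $f^k \leftarrow \max\{f^k, Q^{\mr{ref}}\}$ binds (events $\mc E_k$, count $K_2$) or not ($\bar{\mc E}_k$, count $K_1$): on unclipped iterations the miscalibration term is transferred to the offline distribution with the calibrated coefficient, on clipped iterations it is bounded \emph{exactly} by the value gap $V^{\pi^e}-V^{\mr{ref}}$, the overestimation term is handled with rank $d$ on the former and rank $d_{\mr{ref}}$ on the latter, and the $\min$ then emerges from a case analysis on $K_1$ versus $K_2$.

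The genuine gap is in your Branch 1. You apply, at \emph{every} iteration, the chain ``performance difference under $d^{\pi^\star}$ $\rightarrow$ transfer via $C^\mu_{\pi^\star}$ $\rightarrow$ offline least-squares bound $\Deltaoff$''. These two ingredients have conflicting requirements that cannot both be met on iterations where the clip binds. The improved coefficient $C^\mu_{\pi^\star}$ is defined as a maximum only over functions $f$ with $f(s,a)\geq Q^\mu(s,a)$, so to invoke it you must run the argument on the \emph{clipped} iterate; but the regression guarantee $\norm{f^{k}_h-\mc T f^{k}_{h+1}}{2,\nu_h}^2\leq \Deltaoff$ holds only for the \emph{unclipped} least-squares solution. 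The clipped function's offline Bellman residual is not small: where the clip sets the iterate to $Q^\mu$, the residual is $\mc T Q^\mu - Q^\mu$, which is bounded away from zero whenever $\mu$ is not greedy with respect to its own Q-function, rather than $\wt{O}\paren{\sqrt{\log(|\mc F|)/K}}$. Conversely, if you run the argument on the unclipped iterate (using $\max_a \tilde f_0^k \geq \max_a f_0^k$), the least-squares bound applies but the iterate need not dominate $Q^\mu$ --- that is exactly what ``clip binds'' means --- so the transfer only goes through with the uncalibrated $C_{\pi^\star}$ of \citet{song2023hybrid}, not $C^\mu_{\pi^\star}$. Hence your Branch 1 cannot be established as a global bound with these tools; clipped iterations must instead be charged to the value gap $K\,\E_{\rho}[V^\star-V^\mu]$, which is precisely why the paper introduces the iteration split and extracts the $\min$ from the $K_1$-vs-$K_2$ dichotomy rather than from two simultaneous global bounds. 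Your Branch 2 (value gap plus rank-$d_\mu$ elliptical potential) essentially matches the paper's treatment and is fine, modulo the clipped-iterate regression caveat, which you correctly flagged as an obstacle but treated as a verification detail rather than as the structural reason Branch 1 needs to be reorganized.
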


{\textbf{Comparison to~\citet{song2023hybrid}.}}
\citet{song2023hybrid} analyzes an online RL algorithm that utilizes offline data without imposing conservatism or calibration. We now compare Theorem~\ref{thm:main-thm-informal} to Theorem 1 of \citet{song2023hybrid} to understand the impact of these conditions on the final regret guarantee. Theorem 1 of \citet{song2023hybrid} presents a regret bound: $\regret(K) = \wt{O}\left( C_{\pi^\star} H \sqrt{d K \log\paren{|\mc F|}} \right)$ and we note some improvements in our guarantee, that we also verify via experiments in Section~\ref{subsec:diagonistic}: \textbf{(a)} for the setting where the reference policy $\mu$ contains near-optimal behavior, i.e., $V^\star - V^\mu \lesssim O(H\sqrt{d\log\paren{|\mc F|}/K})$, \methodname\ can enable a tighter regret guarantee compared to \citet{song2023hybrid}; \textbf{(b)} as we show in Appendix~\ref{subsec:CalQL-assumptions}, the concentrability coefficient $C^\mu_{\pi^\star}$ appearing in our guarantee is no larger than the one that appears in Theorem 1 of \citet{song2023hybrid}, providing another source of improvement; and \textbf{(c)} finally, in the case where the reference policy has broad coverage \emph{and} is highly sub-optimal, \methodname\ reverts back to the guarantee from \citep{song2023hybrid}, meaning that \methodname\ improves upon this prior work.

\vspace{-0.2cm}
\vspace{-0.2cm}
\section{Experimental Evaluation}
\vspace{-0.2cm}
\label{sec:experiments}
The goal of our experimental evaluation is to study how well \methodname\ can facilitate sample-efficient online fine-tuning. To this end, we compare \methodname\ with several other state-of-the-art fine-tuning methods on a variety of offline RL benchmark tasks from D4RL~\cite{fu2020d4rl}, \citet{singh2020cog}, and \citet{nair2020accelerating}, evaluating performance before and after fine-tuning. We also study the effectiveness of \methodname\ on higher-dimensional tasks, where the policy and value function must process raw image observations. Finally, we perform empirical studies to understand the efficacy of \methodname\ with different dataset compositions and the impact of errors in the reference value function estimation.

\begin{wrapfigure}{r}{0.5\linewidth}
\centering
\vspace{-0.45cm}
\includegraphics[width=0.95\linewidth]{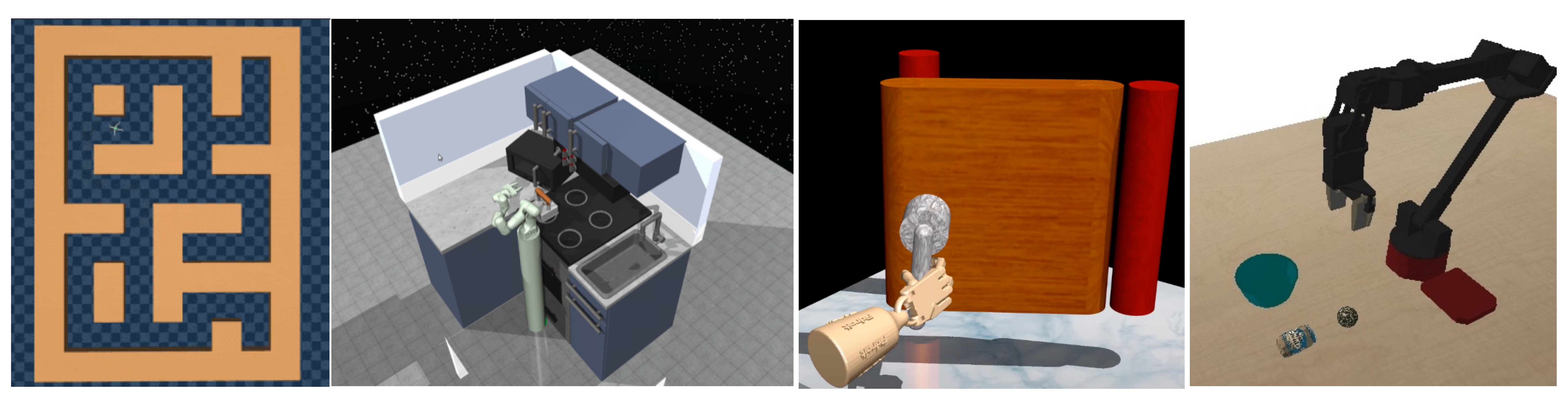}
\vspace{-0.25cm}
\caption{
\footnotesize{\textbf{Tasks:} We evaluate \methodname\ on a diverse set of benchmark problems: \texttt{AntMaze} and \texttt{Frankakitchen} domains from \cite{fu2020d4rl}, \texttt{Adroit} tasks from \cite{nair2020accelerating} and a vision-based robotic manipulation task from \cite{kumar2022pre}.}}
\label{fig:envs}
\vspace{-0.2cm}
\end{wrapfigure}

\textbf{Offline RL tasks and datasets.} We evaluate \methodname\ on a number of benchmark tasks and datasets used by prior works~\cite{kostrikov2021offlineb,nair2020accelerating} to evaluate fine-tuning performance: \textbf{(1)} The {\texttt{AntMaze}} tasks from D4RL~\cite{fu2020d4rl} that require controlling an ant quadruped robot to navigate from a starting point to a desired goal location in a maze. The reward is +1 if the agent reaches within a pre-specified small radius around the goal and 0 otherwise. 
\textbf{(2)} The \texttt{FrankaKitchen} tasks from D4RL require controlling a 9-DoF Franka robot to attain a desired configuration of a kitchen. To succeed, a policy must complete four sub-tasks in the kitchen within a single rollout, and it receives a binary reward of +1/0 for every sub-task it completes. \textbf{(3)} Three \texttt{Adroit} dexterous manipulation tasks~\cite{rajeswaran2018dapg,kostrikov2021offlineb,nair2020accelerating} that require learning complex manipulation skills on a 28-DoF five-fingered hand to \textbf{(a)} manipulate a pen in-hand to a desired configuration (\texttt{pen-binary}), \textbf{(b)} open a door by unlatching the handle (\texttt{door-binary}), and \textbf{(c)} relocating a ball to a desired location (\texttt{relocate-binary}). The agent obtains a sparse binary +1/0 reward if it succeeds in solving the task. Each of these tasks only provides a narrow offline dataset consisting of 25 demonstrations collected via human teleoperation and additional trajectories collected by a BC policy. Finally, to evaluate the efficacy of \methodname\ on a task where we learn from raw visual observations, we study \textbf{(4)} a pick-and-place task from prior work~\cite{singh2020cog,kumar2022pre} that requires learning to pick a ball and place it in a bowl, in the presence of distractors. Additionally, we compared \methodname\ on D4RL locomotion tasks (\texttt{halfcheetah}, \texttt{hopper}, \texttt{walker}) in Appendix~\ref{app:locomotion}.

\begin{figure}[t]
\begin{center}    
{\includegraphics[clip,width=1\linewidth]{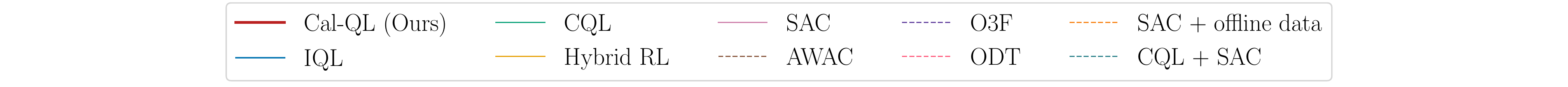}} 
{\includegraphics[clip,width=1\linewidth]{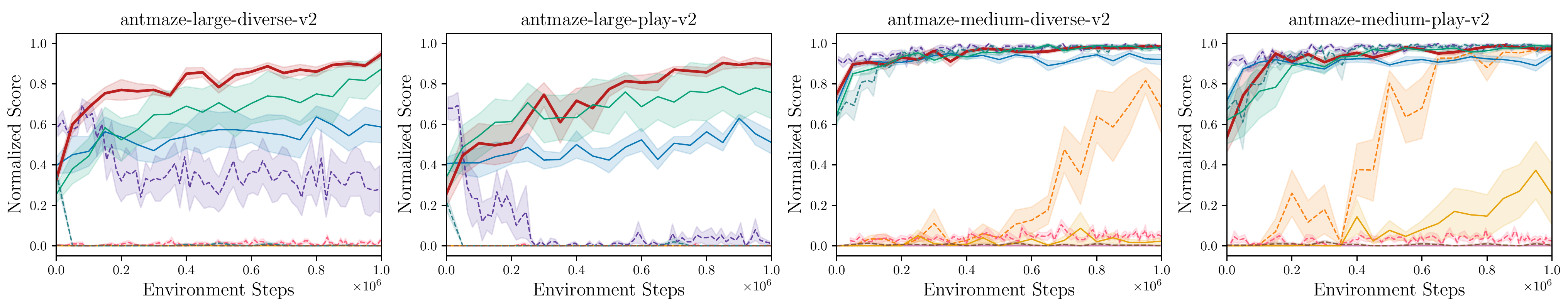}}\\{\includegraphics[clip,width=1\linewidth]{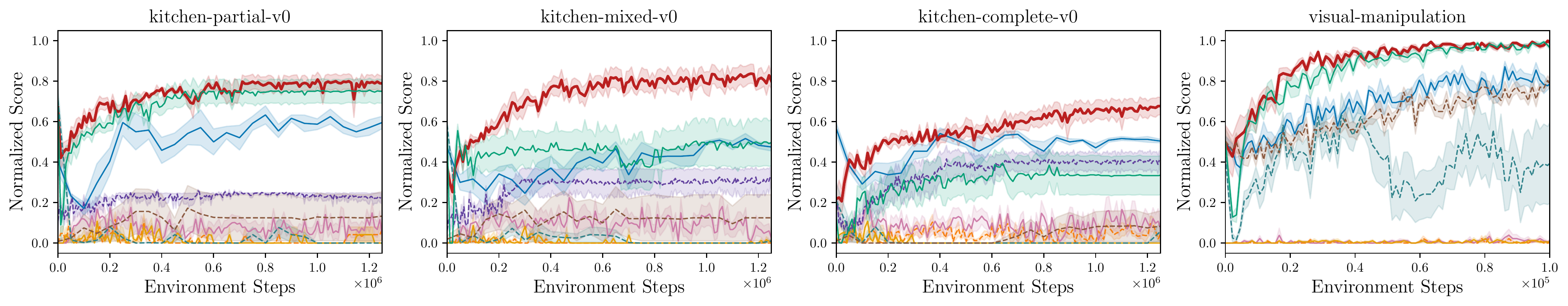}} {\includegraphics[clip,width=0.75\linewidth]{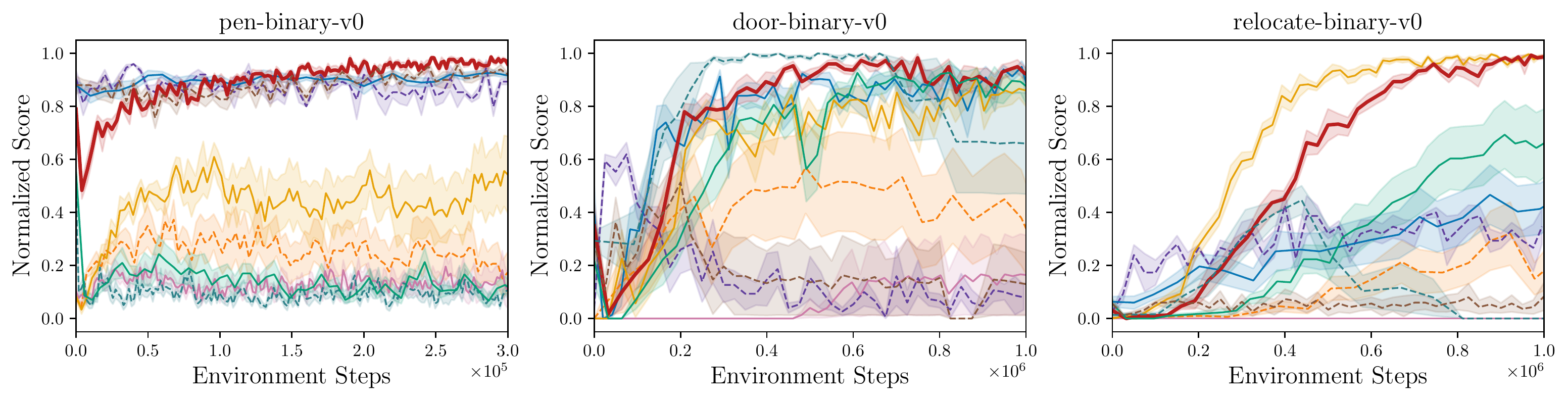}}
\end{center}
\vspace{-0.45cm}
\caption{\label{fig:all_tasks} \footnotesize{\textbf{Online fine-tuning after offline initialization on the benchmark tasks}. The plots show the online fine-tuning phase \emph{after} pre-training for each method (except SAC-based approaches which are not pre-trained). Observe that \methodname\ consistently matches or exceeds the speed and final performance of the best prior method and is the only algorithm to do so across all tasks. (6 seeds)}}
\vspace{-0.6cm}
\end{figure}

\noindent \textbf{Comparisons, prior methods, and evaluation protocol.} We compare \methodname\ to running online SAC~\cite{haarnoja2018soft} from scratch, as well as prior approaches that leverage offline data. This includes na\"ively fine-tuning offline RL methods such as CQL~\cite{kumar2020conservative} and IQL~\cite{kostrikov2021offlineb}, as well as fine-tuning with AWAC~\citep{nair2020accelerating}, O3F~\cite{mark2022fine} and online decision transformer (ODT)~\citep{zheng2022online}, methods specifically designed for offline RL followed by online fine-tuning. In addition, we also compare to a baseline that trains SAC~\cite{haarnoja2018soft} using both online data and offline data (denoted by ``SAC + offline data'') that mimics DDPGfD~\citep{vecerik2017leveraging} but utilizes SAC instead of DDPG. We also compare to Hybrid RL~\citep{song2023hybrid}, a recently proposed method that improves the sample efficiency of the ``SAC + offline data'' approach, and ``CQL+SAC'', which first pre-train with CQL and then run fine-tuning with SAC on a mixture of offline and online data without conservatism. More details of each method can be found in Appendix~\ref{app:hyperparam}.
We present learning curves for online fine-tuning and also quantitatively evaluate each method on its ability to improve the initialization learned from offline data measured in terms of \textbf{(i)} final performance after a pre-defined number of steps per domain and \textbf{(ii)} the cumulative regret over the course of online fine-tuning. In Section~\ref{subsec:highutd}, we run \methodname\ with a higher update-to-data (UTD) ratio and compare it to RLPD~\cite{rlpd}, a more sample-efficient version of ``SAC + offline data''.

\vspace{-0.2cm}
\subsection{Empirical Results} 
\vspace{-0.2cm}

We first present a comparison of \methodname\ in terms of the normalized performance before and after fine-tuning in Table~\ref{tab:performance} and the cumulative regret in a fixed number of online steps in Table~\ref{tab:results_regret}. Following the protocol of \cite{fu2020d4rl}, we normalize the average return values for each domain with respect to the highest possible return (+4 in FrankaKitchen; +1 in other tasks; see Appendix~\ref{appendix:normalized_score} for more details).  

\textbf{\methodname\ improves the offline initialization significantly.} Observe in Table~\ref{tab:performance} and Figure~\ref{fig:all_tasks} that while the performance of offline initialization acquired by \methodname\ is comparable to that of other methods such as CQL and IQL, \methodname\ is able to improve over its offline initialization the most by \textbf{106.9\%} in aggregate and achieve the best fine-tuned performance in \textbf{9 out of 11} tasks.

\textbf{\methodname\ enables fast fine-tuning.} 
Observe in Table~\ref{tab:results_regret} that \methodname\ achieves the smallest regret on \textbf{8 out of 11} tasks, attaining an average regret of 0.22 which improves over the next best method (IQL) by \textbf{42\%}. Intuitively, this means that \methodname\ does not require running highly sub-optimal policies. In tasks such as \texttt{relocate-binary}, \methodname\ enjoys the fast online learning benefits associated with na\"ive online RL methods that incorporate the offline data in the replay buffer (SAC + offline data and \methodname\ are the only two methods to  attain a score of $\geq$ 90\% on this task) unlike prior offline RL methods. As shown in Figure~\ref{fig:all_tasks}, in the \texttt{kitchen} and \texttt{antmaze} domains, \methodname\ brings the benefits of fast online learning together with a good offline initialization, improving drastically on the regret metric. Finally, observe that the initial unlearning at the beginning of fine-tuning with conservative methods observed in Section~\ref{sec:empirical_analysis} is greatly alleviated in all tasks (see Appendix~\ref{app:cql_dip_zoom_in} for details).

\begin{table*}[h]
\scriptsize{
\begin{center}

\vspace{-0.1cm}
\!\!\!\!\scalebox{0.89}{
\begin{tabular}{l|c|c|c|c|c|c|c|c|c||c}
Task & CQL & IQL & AWAC & O3F & ODT & CQL+SAC & Hybrid SRL & SAC+od & SAC & Cal-QL (Ours) \\ \hline \hline
\texttt{large-diverse}  & 25 $\rightarrow$ 87 & 40 $\rightarrow$ 59 & 00 $\rightarrow$ 00 & 59 $\rightarrow$ 28 & 00 $\rightarrow$ 01 & 36 $\rightarrow$ 00 &  $\rightarrow$ 00 &  $\rightarrow$ 00 &  $\rightarrow$ 00 & 33 $\rightarrow$  \textbf{95} \\
\texttt{large-play}  & 34 $\rightarrow$ 76 & 41 $\rightarrow$ 51 & 00 $\rightarrow$ 00 & 68 $\rightarrow$ 01 & 00 $\rightarrow$ 00 & 21 $\rightarrow$ 00 &  $\rightarrow$ 00 &  $\rightarrow$ 00 &  $\rightarrow$ 00 & 26 $\rightarrow$  \textbf{90} \\
\texttt{medium-diverse}  & 65 $\rightarrow$  \textbf{98} & 70 $\rightarrow$ 92 & 00 $\rightarrow$ 00 & 92 $\rightarrow$ 97 & 00 $\rightarrow$ 03 & 64 $\rightarrow$ \textbf{98} &  $\rightarrow$ 02 &  $\rightarrow$ 68 &  $\rightarrow$ 00 & 75 $\rightarrow$ \textbf{98} \\
\texttt{medium-play}  & 62 $\rightarrow$ 98 & 72 $\rightarrow$ 94 & 00 $\rightarrow$ 00 & 89 $\rightarrow$ \textbf{99} & 00 $\rightarrow$ 05 & 67 $\rightarrow$ 98 &  $\rightarrow$ 25 &  $\rightarrow$ 96 &  $\rightarrow$ 00 & 54 $\rightarrow$ 97 \\  \hline
\texttt{partial}  & 71 $\rightarrow$ 75 & 40 $\rightarrow$ 60 & 01 $\rightarrow$ 13 & 11 $\rightarrow$ 22 & - & 71 $\rightarrow$ 00 &  $\rightarrow$ 00 &  $\rightarrow$ 07 &  $\rightarrow$ 03 & 67 $\rightarrow$ \textbf{79} \\
\texttt{mixed}  & 56 $\rightarrow$ 50 & 48 $\rightarrow$ 48 & 02 $\rightarrow$ 12 & 06 $\rightarrow$ 33 & - & 59 $\rightarrow$ 01 &   $\rightarrow$ 01 &   $\rightarrow$ 00 &   $\rightarrow$ 02 & 38 $\rightarrow$ \textbf{80} \\
\texttt{complete} & 13 $\rightarrow$ 34 & 57 $\rightarrow$ 50 & 01 $\rightarrow$ 08 & 17 $\rightarrow$ 41 & - & 21 $\rightarrow$ 06 &  $\rightarrow$ 00 & $\rightarrow$ 05 &  $\rightarrow$ 06 & 22 $\rightarrow$ \textbf{68} \\  \hline
\texttt{pen} & 55 $\rightarrow$ 13 & 88 $\rightarrow$ 92 & 88 $\rightarrow$ 92 & 91 $\rightarrow$ 89 & - & 48 $\rightarrow$ 10 &  $\rightarrow$ 54 &  $\rightarrow$ 17 &  $\rightarrow$ 11 & 79 $\rightarrow$ \textbf{99} \\
\texttt{door} & 22 $\rightarrow$ 88 & 41 $\rightarrow$ 88 & 29 $\rightarrow$ 13 & 04 $\rightarrow$ 08 & - & 29 $\rightarrow$ 66 &  $\rightarrow$ 88 &  $\rightarrow$ 39 &  $\rightarrow$ 17 & 35 $\rightarrow$ \textbf{92} \\
\texttt{relocate} & 06 $\rightarrow$ 69 & 06 $\rightarrow$ 45 & 06 $\rightarrow$ 08 & 03 $\rightarrow$ 35 & - & 01 $\rightarrow$ 00 &  $\rightarrow$ \textbf{99} &  $\rightarrow$ 16 &  $\rightarrow$ 00 & 03 $\rightarrow$ 98 \\  \hline
\texttt{manipulation} & 50 $\rightarrow$ 97 & 49 $\rightarrow$ 81 & 50 $\rightarrow$ 73 & - & - & 42 $\rightarrow$ 41 &  $\rightarrow$ 00 &  $\rightarrow$ 01 &  $\rightarrow$ 01 & 49 $\rightarrow$ \textbf{99} \\ \hline \hline
\textbf{average} & 42 $\rightarrow$ 71 & 50 $\rightarrow$ 69 & 16 $\rightarrow$ 20 & 44 $\rightarrow$ 45 & 00 $\rightarrow$ 02 & 42 $\rightarrow$ 29 &  $\rightarrow$ 24 &  $\rightarrow$ 23 &  $\rightarrow$ 04 & 44 $\rightarrow$ \textbf{90} \\ \hline
\textbf{improvement} & + 71.0\%  & + 37.7\%  & + 23.7\%  & + 3.0\%  & N/A  & - 30.3\%  & N/A &  N/A &  N/A & \textbf{+ 106.9\%}  \\
\end{tabular}}
\vspace{-0.15cm}
a\caption{\footnotesize{\textbf{Normalized score before \& after online fine-tuning.} Observe that \methodname\ improves over the best prior fine-tuning method and attains a much larger performance improvement over the course of online fine-tuning. The numbers represent the normalized score out of 100 following the convention in \citep{fu2020d4rl}.} \label{tab:performance}}

\vspace{-0.4cm}
\end{center}
}
\end{table*}

\begin{table*}[h]

\vspace{-0.2cm}
\small{
\begin{center}
\scalebox{0.85}{
\begin{tabular}{l|c|c|c|c|c|c|c|c|c||c}
Task & CQL & IQL & AWAC & O3F & ODT & CQL+SAC & Hybrid RL & SAC+od & SAC & Cal-QL (Ours)\\ \hline \hline 
\texttt{large-diverse}  & 0.35 & 0.46 & 1.00 & 0.62 & 0.98 & 0.99 & 1.00 & 1.00 & 1.00 & \textbf{0.20}  \\
\texttt{large-play}     & 0.32 & 0.52 & 1.00 & 0.91 & 1.00 & 0.99 & 1.00 & 1.00 & 1.00 & \textbf{0.28}  \\
\texttt{medium-diverse} & 0.06 & 0.08 & 0.99 & \textbf{0.03} & 0.95 & 0.06 & 0.98 & 0.77 & 1.00 & 0.05  \\
\texttt{medium-play}    & 0.09 & 0.10 & 0.99 & \textbf{0.04} & 0.96 & 0.06 & 0.90 & 0.47 & 1.00 & 0.07  \\ \hline
\texttt{partial}                   & 0.31 & 0.49 & 0.89 & 0.78 & - & 0.97 & 0.98 & 0.98 & 0.92 & \textbf{0.27}  \\
\texttt{mixed}                     & 0.55 & 0.60 & 0.88 & 0.72 & - & 0.97 & 0.99 & 1.00 & 0.91 & \textbf{0.27}  \\
\texttt{complete}                  & 0.70 & 0.53 & 0.97 & 0.66 & - & 0.99 & 0.99 & 0.96 & 0.91 & \textbf{0.44}  \\ \hline
\texttt{pen}                       & 0.86 & \textbf{0.11} & 0.12 & 0.13 & - & 0.90 & 0.56 & 0.75 & 0.87 & \textbf{0.11}  \\
\texttt{door}                      & 0.36 & 0.25 & 0.81 & 0.82 & - & \textbf{0.23} & 0.35 & 0.60 & 0.94 & \textbf{0.23}  \\
\texttt{relocate}                  & 0.71 & 0.74 & 0.95 & 0.71 & - & 0.86 & \textbf{0.30} & 0.89 & 1.00 & 0.43  \\ \hline
\texttt{manipulation}              & 0.15 & 0.32 & 0.38 & - & - & 0.61 & 1.00 & 1.00 & 0.99 & \textbf{0.11}   \\ \hline \hline
\textbf{average}              & 0.41 & 0.38 & 0.82 & 0.54 & 0.97 & 0.69 & 0.82 & 0.86 & 0.96 & \textbf{0.22} 

\end{tabular}}
\vspace{-0.1cm}
\caption{\footnotesize{\textbf{Cumulative regret averaged over the steps of fine-tuning.} The smaller the better and 1.00 is the worst. \methodname\ attains the smallest overall regret, achieving the best performance among 8 / 11 tasks.}}
\label{tab:results_regret}
\vspace{-0.3cm}
\end{center}}
\end{table*}

\vspace{-0.1cm}
\subsection{\methodname\ With High Update-to-Data (UTD) Ratio}
\label{subsec:highutd}
\vspace{-0.2cm}

We can further enhance the online sample efficiency of \methodname\ by increasing the number of gradient steps per environment step made by the algorithm. The number of updates per environment step is usually called the update-to-data (UTD) ratio.
In standard online RL, running off-policy Q-learning with a high UTD value (e.g., 20, compared to the typical value of 1) often results in challenges pertaining to overfitting~\citep{li2023efficient, redq, rlpd, replaybarrier}. As expected, we noticed that running \methodname\ with a high UTD value also leads these overfitting challenges. To address these challenges in high UTD settings, we combine \methodname\ with the Q-function architecture in recent work, RLPD~\citep{rlpd} (i.e., we utilized layer normalization in the Q-function and ensembles akin to~\citet{redq}), that attempts to tackle overfitting challenges. Note that \methodname\ still first pre-trains on the offline dataset using Equation~\ref{eqn:cal_ql_training} followed by online fine-tuning, unlike RLPD that runs online RL right from the start.
In Figure~\ref{fig:rlpd}, we compare \methodname\ ($\text{UTD}=20$) with RLPD~\cite{rlpd} ($\text{UTD}=20$) and also \methodname\ ($\text{UTD}=1$) as a baseline. Observe that \methodname\ ($\text{UTD}=20$) improves over \methodname\ ($\text{UTD}=1$) and training from scratch (RLPD).   

\begin{figure}[t]
\begin{center}    
{\includegraphics[clip,width=1\linewidth]{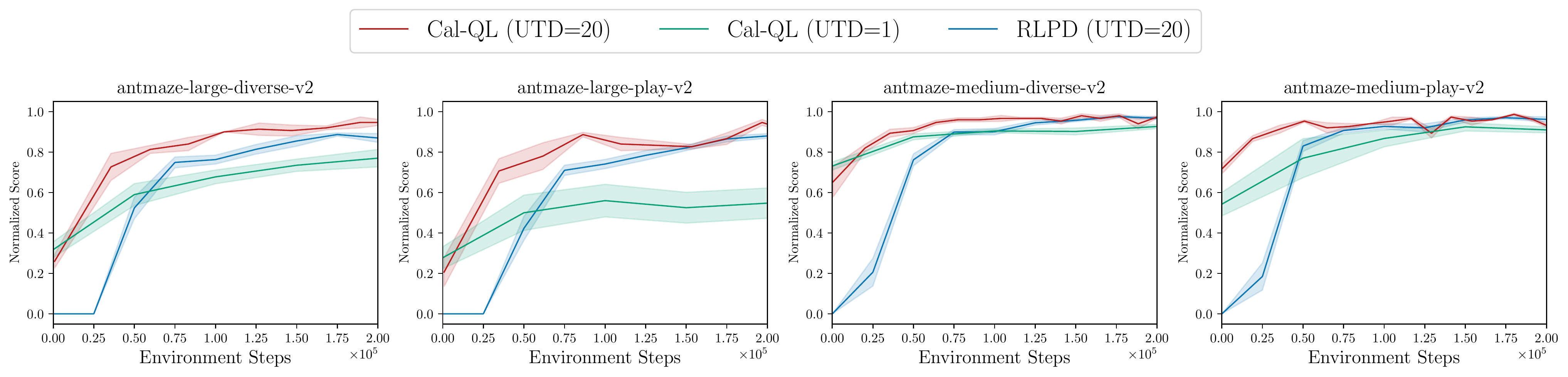}} {\includegraphics[clip,width=0.75\linewidth]{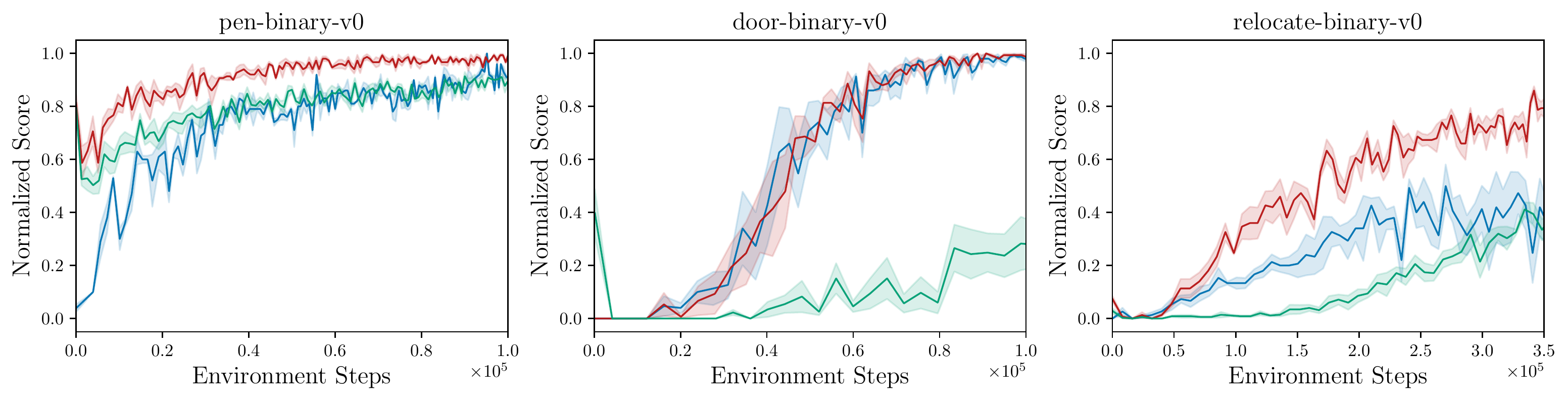}}
\end{center}
\vspace{-0.3cm}
\caption{\label{fig:rlpd} \footnotesize{\textbf{\methodname\ with UTD=20}. Incorporating design choices from RLPD enables \methodname\ to achieve sample-efficient fine-tuning with UTD=20. Specifically, \methodname\ generally attains similar or higher asymptotic performance as RLPD, while also exhibiting a smaller cumulative regret. (3 seeds)}}
\vspace{-0.6cm}
\end{figure}

\subsection{Understanding the Behavior of \methodname}
\label{subsec:diagonistic}

In this section, we aim to understand the behavior of \methodname\ by performing controlled experiments that modify the dataset composition, and by investigating various metrics to understand the properties of scenarios where utilizing \methodname\ is especially important for online fine-tuning.         

\textbf{Effect of data composition.} To understand the efficacy of \methodname\ with different data compositions, we ran it on a newly constructed fine-tuning task on the medium-size \texttt{AntMaze} domain with a low-coverage offline dataset, which is generated via a scripted controller that starts from a fixed initial position and navigates the ant to a fixed goal position. In Figure~\ref{fig:ant-narrow}, we plot the performance of \methodname\ and baseline CQL (for comparison) on this task, alongside the trend of average Q-values over the course of offline pre-training (to the left of the dashed vertical line, before 250 training epochs) and online fine-tuning (to the right of the vertical dashed line, after 250 training epochs), and the trend of \emph{bounding rate}, i.e., the fraction of transitions in the data-buffer for which the constraint in \methodname\ actively lower-bounds the learned Q-function with the reference value. For comparison, we also plot these quantities for a diverse dataset with high coverage on the task (we use the \texttt{antmaze-medium-diverse} from \citet{fu2020d4rl} as a representative diverse dataset) in Figure~\ref{fig:ant-narrow}. 

Observe that for the diverse dataset, both na\"ive CQL and \methodname\ perform similarly, and indeed, the learned Q-values behave similarly for both of these methods. In this setting, online learning doesn't spend samples to correct the Q-function when fine-tuning begins leading to a low bounding rate, almost always close to 0. Instead, with the narrow dataset, we observe that the Q-values learned by na\"ive CQL are much smaller, and are corrected once fine-tuning begins. This correction co-occurs with a drop in performance (solid blue line on left), and na\"ive CQL is unable to recover from this drop. \methodname\, which calibrates the scale of the Q-function for many more samples in the dataset, stably transitions to online fine-tuning with no unlearning (solid red line on left). 

\begin{figure}[t]
\centering
\includegraphics[width=0.7\linewidth]{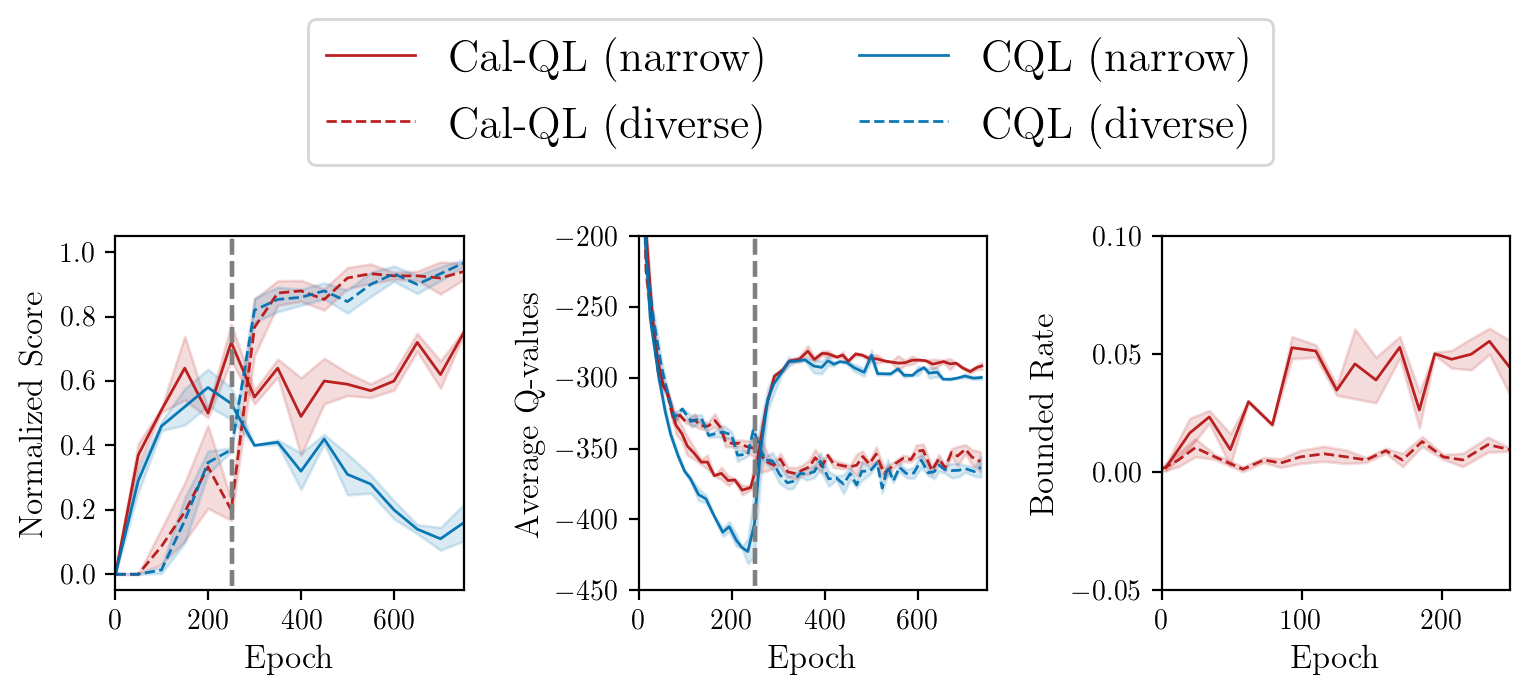}
\caption{
\footnotesize{\textbf{Performance of \methodname\ with data compositions.} \methodname\ is most effective with narrow datasets, where Q-values need to be corrected at the beginning of fine-tuning.}
}
\label{fig:ant-narrow}
\vspace{-0.5cm}
\vspace{-0.25cm}

\end{figure}
This suggests that in settings with narrow datasets (e.g., in the experiment above and in the \texttt{adroit} and \texttt{visual-manipulation} domains from Figure~\ref{fig:all_tasks}), Q-values learned by na\"ive conservative methods are more likely to be smaller than the ground-truth Q-function of the behavior policy due to function approximation errors. Hence utilizing \methodname\ to calibrate the Q-function against the behavior policy can be significantly helpful. On the other hand, with significantly high-coverage datasets, especially in problems where the behavior policy is also random and sub-optimal, Q-values learned by na\"ive methods are likely to already be calibrated with respect to those of the behavior policy. Therefore no explicit calibration might be needed (and indeed, the bounding rate tends to be very close to 0 as shown in Figure~\ref{fig:ant-narrow}). In this case, \methodname\ will revert back to standard CQL, as we observe in the case of the diverse dataset above. This intuition is also reflected in Theorem~\ref{thm:main-thm-informal}: when the reference policy $\mu$ is close to a narrow, expert policy, we would expect \methodname\ to be especially effective in controlling the efficiency of online fine-tuning. 

\begin{wrapfigure}{r}[0pt]{0.3\columnwidth}
\centering
\vspace{-0.7cm}

\includegraphics[width=0.3\columnwidth]{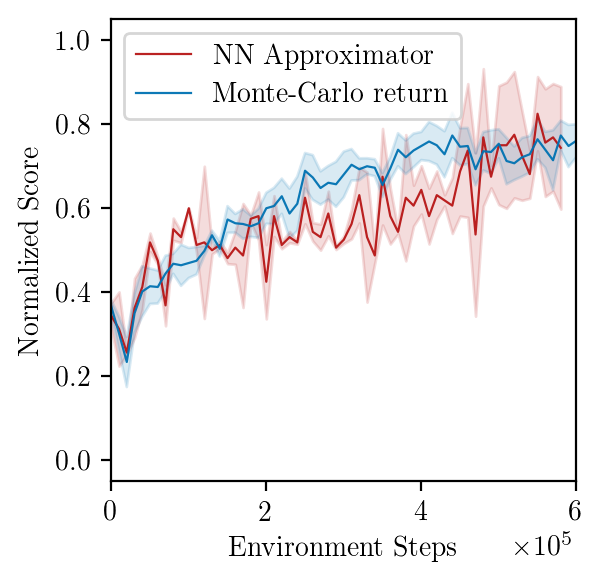}
\vspace{-0.7cm}
\caption{
\footnotesize{\textbf{Using a neural network approximator for the reference value function performs comparable to using the Monte-Carlo return.} This indicates that errors in the reference Q-function do not negatively impact the performance.}
}
\label{fig:kitchen-regress}
\vspace{-0.9cm}
\end{wrapfigure}

\textbf{Estimation errors in the reference value function do not affect performance significantly.} In our experiments, we compute the reference value functions using Monte-Carlo return estimates. However, this may not be available in all tasks. How does \methodname\ behave when reference value functions must be estimated using the offline dataset itself? To answer this, we ran an experiment on the \texttt{kitchen} domain, where instead of using an estimate for $Q^\mu$ based on the Monte-Carlo return, we train a neural network function approximator $Q^\mu_\theta$ to approximate $Q^\mu$ via supervised regression on to Monte-Carlo return, which is then utilized by \methodname. Observe in Figure \ref{fig:kitchen-regress}, that the performance of \methodname\ largely remains unaltered. This implies as long as we can obtain a reasonable function approximator to estimate the Q-function of the reference policy (in this case, the behavior policy), errors in the reference Q-function do not affect the performance of \methodname\ significantly.

\vspace{-0.2cm}
\section{Discussion, Future Directions, and  Limitations}
\vspace{-0.2cm}

In this work we developed \methodname\, a method for acquiring conservative offline initializations that facilitate fast online fine-tuning. \methodname\ learns conservative value functions that are constrained to be larger than the value function of a reference policy. This form of calibration allows us to avoid initial unlearning when fine-tuning with conservative methods, while also retaining the effective asymptotic performance that these methods exhibit. Our theoretical and experimental results highlight the benefit of \methodname\ in enabling fast online fine-tuning. 
While \methodname\ outperforms prior methods, we believe that we can develop even more effective methods by adjusting calibration and conservatism more carefully. A limitation of our work is that we do not consider fine-tuning setups where pre-training and fine-tuning tasks are different, but this is an interesting avenue for future work.   
\vspace{-0.2cm}

\section*{Acknowledgments}
\vspace{-0.1cm}
This research was partially supported by the Office of Naval Research N00014-21-1-2838, N00014-22-1-2102, ARO W911NF-21-1-0097, the joint Simons Foundation-NSF DMS grant \#2031899, AFOSR FA9550-22-1-0273, and Tsinghua-Berkeley Shenzhen Institute (TBSI) Research Fund, as well as support from Intel and C3.ai, the Savio computational cluster resource provided by the Berkeley Research Computing program, and computing support from Google. We thank Philip J. Ball, Laura Smith, and Ilya Kostrikov for sharing the experimental results of RLPD. AK is supported by the Apple Scholars in AI/ML PhD Fellowship. MN is partially supported by the Nakajima Foundation Fellowship. YZ is partially supported by Siemens CITRIS and TBSI research fund. YZ would like to thank Prof. Song Mei for insightful suggestions on the presentation of Theorem~\ref{thm:main-thm-informal}.

\bibliographystyle{abbrvnat}
\bibliography{reference.bib}

\appendix
\newpage

\part*{Appendices}

\section{Implementation details of Cal-QL}
\label{app:imp_details}
Our algorithm, \methodname\ is illustrated in Algorithm~\ref{alg:practical_alg}. A python-style implementation is provided in Appendix~\ref{app:calql_algo}. The official code and experimental logs are available at \url{https://github.com/nakamotoo/Cal-QL}

\subsection{\methodname\ Algorithm}
We use $J_Q(\theta)$ to denote the calibrated conservative regularizer for the Q network update:
\begin{align}
    \label{eqn:calql_training_J}
    J_Q(\theta):=&\;{{\alpha \underbrace{\left(\mathbb{E}_{\bs \sim \mathcal{D}, \ba \sim \pi} \brac{{\color[HTML]{B85450}{\max \left( Q_\theta(s,a), Q^\mu(s,a) \right)}} } - \mathbb{E}_{s, a \sim \mathcal{D}}\left[Q_\theta(s,a)\right]\right)}_{\text{Calibrated conservative regularizer }\mathcal{R}(\theta)}}}\\
    &\;+ \frac{1}{2} {\mathbb{E}_{s, a, s^\prime\sim \mathcal{D}}\left[\left(Q_\theta(s, a) - \bellman^\pi\bar{Q}(s, a)\right)^2 \right]}.
\end{align}

\begin{center}
\begin{minipage}{0.62\linewidth}
    \begin{algorithm}[H]
    \caption{\methodname\ pseudo-code}
    \label{alg:practical_alg}
    \begin{algorithmic}[1]
        \State Initialize Q-function, $Q_\theta$, a policy, $\pi_\phi$
        \For{step $t$ in \{1, \dots, N\}}
            \State Train the Q-function using the conservative regularizer in Eq.~\ref{eqn:calql_training_J}:
            \begin{equation}
                \theta_t := \theta_{t-1} - \eta_Q \nabla_\theta J_Q(\theta)
            \end{equation}
            \State Improve policy $\pi_\phi$ with SAC-style update:
            \begin{equation}
                \phi_{t} := \phi_{t-1} + \eta_\pi \mathbb{E}_{\bs \sim \mathcal{D}, \ba \sim \pi_\phi(\cdot|\bs)}[Q_\theta(\bs, \ba)\! -\! \log \pi_\phi(\ba|\bs)]
            \end{equation}
        \EndFor
    \end{algorithmic}
    \end{algorithm}
\end{minipage}
\end{center}

\subsection{Python Implementation}
\label{app:calql_algo}
\vspace{-0.1cm}
\begin{python}[caption={Training Q networks given a batch of data}]
q_data = critic(batch['observations'], batch['actions'])

next_dist = actor(batch['next_observations'])
next_pi_actions, next_log_pis = next_dist.sample()

target_qval = target_critic(batch['observations'], next_pi_actions)
target_qval = batch['rewards'] + self.gamma * (1 - batch['dones']) * target_qval

td_loss = mse_loss(q_data, target_qval)

num_samples = 4
random_actions = uniform((num_samples, batch_size, action_dim), min=-1, max=1)
random_pi = 0.5 ** batch['actions'].shape[-1]

pi_actions, log_pis = actor(batch['observations'])

q_rand_is = critic(batch['observations'], random_actions) - random_pi
q_pi_is = critic(batch['observations'], pi_actions) - log_pis

# Cal-QL's modification
mc_return = batch['mc_return'].repeat(num_samples)
q_pi_is = max(q_pi_is, mc_return)

cat_q = concatenate([q_rand_is, q_pi_is], new_axis=True)
cat_q = logsumexp(cat_q, axis=0) # sum over num_samples
critic_loss = td_loss + ((cat_q - q_data).mean() * cql_alpha)

critic_optimizer.zero_grad()
critic_loss.backward()
critic_optimizer.step()
\end{python}
\newpage
\begin{python}[caption={Training the policy (or the actor) given a batch of data }]
# return distribution of actions
pi_actions, log_pis = actor(batch['observations'])

# calculate q value of actor actions
qpi = critic(batch['observations'], actions)
qpi = qpi.min(axis=0) 

# same objective as CQL (kumar et al.)
actor_loss = (log_pis * self.alpha - qpi).mean()

# optimize loss
actor_optimizer.zero_grad()
actor_loss.backward()
actor_optimizer.step()
\end{python}

\section{Environment Details}
\label{appendix:env_details}
\subsection{Antmaze} 
The Antmaze navigation tasks aim to control an 8-DoF ant quadruped robot to move from a starting point to a desired goal in a maze. The agent will receive sparse +1/0 rewards depending on whether it reaches the goal or not. We study each method on ``medium'' and ``hard'' (shown in Figure~\ref{fig:envs}) mazes which are difficult to solve, using the following datasets from D4RL~\citep{fu2020d4rl}: \texttt{large-diverse}, \texttt{large-play}, \texttt{medium-diverse}, and \texttt{medium-play}. The difference between ``diverse'' and ``play'' datasets is the optimality of the trajectories they contain. The ``diverse'' datasets contain the trajectories commanded to a random goal from random starting points, while the ``play'' datasets contain the trajectories commanded to specific locations which are not necessarily the goal. We used an episode length of 1000 for each task. For \methodname, CQL, and IQL, we pre-trained the agent using the offline dataset for 1M steps. We then trained online fine-tuning for 1M environment steps for each method.

\subsection{Franka Kitchen} 
The Franka Kitchen domain require controlling a 9-DoF Franka robot to arrange a kitchen environment into a desired configuration. The configuration is decomposed into 4 subtasks, and the agent will receive rewards of $0$, $+1$, $+2$, $+3$, or $+4$ depending on how many subtasks it has managed to solve. To solve the whole task and reach the desired configuration, it is important to learn not only how to solve each subtask, but also to figure out the correct order to solve. We study this domain using datasets with three different optimalities: \texttt{kitchen-complete}, \texttt{kitchen-partial}, and \texttt{kitchen-mixed}. The ``complete'' dataset contains the trajectories of the robot performing the whole task completely. The ``partial'' dataset partially contains some complete demonstrations, while others are incomplete demonstrations solving the subtasks. The ``mixed'' dataset only contains incomplete data without any complete demonstrations, which is hard and requires the highest degree of stitching and generalization ability. We used an episode length of 1000 for each task. For \methodname, CQL, and IQL, we pre-trained the agent using the offline dataset for 500K steps. We then performed online fine-tuning for 1.25M environment steps for each method.

\subsection{Adroit} 
The Adroit domain involves controlling a 24-DoF shadow hand robot. There are 3 tasks we consider in this domain: \texttt{pen-binary}, \texttt{relocate-binary}, \texttt{relocate-binary}. These tasks comprise a limited set of narrow human expert data distributions ($\sim 25$) with additional trajectories collected by a behavior-cloned policy. We truncated each trajectory and used the positive segments (terminate when the positive reward signal is found) for all methods. This domain has a very narrow dataset distribution and a large action space. In addition, learning in this domain is difficult due to the sparse reward, which leads to exploration challenges. We utilized a variant of the dataset used in prior work \cite{AWAC} to have a standard comparison with SOTA offline fine-tuning experiments that consider this domain. For the offline learning phase, we pre-trained the agent for 20K steps. We then performed online fine-tuning for 300K environment steps for the \texttt{pen-binary} task, and 1M environment steps for the \texttt{door-binary} and \texttt{relocate-binary} tasks. The episode length is 100, 200, and 200 for \texttt{pen-binary}, \texttt{door-binary}, and \texttt{relocate-binary} respectively.

\subsection{Visual Manipulation Domain} 
The Visual Manipulation domain consists of a pick-and-place task. This task is a multitask formulation explored in the work, Pre-training for Robots (PTR) \cite{2022arXiv221005178K}. Here each task is defined as placing an object in a bin. A distractor object was present in the scene as an adversarial object which the agent had to avoid picking. There were 10 unique objects and no overlap between the task objects and the interfering/distractor objects. The episode length is 40. For the offline phase, we pre-trained the policy with offline data for 50K steps. We then performed online fine-tuning for 100K environment steps for each method, taking 5 gradient steps per environment step.

\section{Experiment Details}
\label{app:hyperparam}

\subsection{Normalized Scores}
\label{appendix:normalized_score}

The \texttt{visual-manipulation}, \texttt{adroit}, and \texttt{antmaze} domains are all goal-oriented, sparse reward tasks. In these domains, we computed the normalized metric as simply the goal achieved rate for each method. For example, in the visual manipulation environment, if the object was placed successfully in the bin, a $+1$ reward was given to the agent and the task is completed. Similarly, for the \texttt{door-binary} task in the adroit tasks, we considered the success rate of opening the door. For the \texttt{kitchen} task, the task is to solve a series of 4 sub-tasks that need to be solved in an iterative manner. The normalized score is computed simply as $\frac{\# \text{tasks solved}}{\text{total tasks}}$.

\subsection{Mixing Ratio Hyperparameter}
\label{appendix:mixing_ratio_overview}
In this work, we explore the mixing ratio parameter $m$, which is used during the online fine-tuning phase. The mixing ratio is either a value in the range $\left[0, 1\right]$ or the value -1. If this mixing ratio is within $\left[0, 1\right]$, it represents what percentage of offline and online data is seen in each batch when fine-tuning. For example, if the mixing ratio $m=0.25$, that means for each batch we sample 25\% from the offline data and 75\% from online data. Instead, if the mixing ratio is -1, the buffers are appended to each other and sampled uniformly. We present an ablation study over mixing ratio in Appendix~\ref{appendix:hype-ablations}.

\subsection{Details and Hyperparameters for CQL and \methodname\ }
We list the hyperparameters for CQL and \methodname\ in Table~\ref{table:hyper_all_cql}. We utilized a variant of Bellman backup that computes the target value by performing a maximization over target values computed for $k$ actions sampled from the policy at the next state, where we used $k=4$ in visual pick and place domain and $k=10$ in others. In the Antmaze domain, we used the dual version of CQL~\citep{kumar2020conservative} and conducted ablations over the value of the threshold of the CQL regularizer $\mathcal{R}(\theta)$ (target action gap) instead of $\alpha$. In the visual-manipulation domain which is not presented in the original paper, we swept over the alpha values of $\alpha=0.5, 1, 5, 10$, and utilized separate $\alpha$ values for offline ($\alpha=5$) and online ($\alpha=0.5$) phases for the final results. We built our code upon the CQL implementation from \url{https://github.com/young-geng/JaxCQL}~\cite{geng2022jaxcql}. We used a single NVIDIA TITAN RTX chip to run each of our experiments.

\subsection{Details and Hyperparameters for IQL}
We list the hyperparameters for IQL in Table~\ref{table:hyper_all_iql}. To conduct our experiments, we used the official implementation of IQL provided by the authors~\citep{kostrikov2021offlineb}, and primarily followed their recommended parameters, which they previously ablated over in their work. In the visual-manipulation domain which is not presented in the original paper, we performed a parameter sweep over expectile $\tau = 0.5, 0.6, 0.7, 0.8, 0.9, 0.95, 0.99$ and temperature $\beta = 1, 3, 5, 10, 25, 50$ and selected the best-performing values of $\tau = 0.7$ and $\beta = 10$ for our final results. In addition, as the second best-performing method in the visual-manipulation domain, we also attempted to use separate $\beta$ values for IQL, for a fair comparison with CQL and \methodname. However, we found that it has little to no effect, as shown in Figure~\ref{fig:beta_ablation}. 

\subsection{Details and Hyperparameters for AWAC and ODT}
We used the JAX implementation of AWAC from \url{https://github.com/ikostrikov/jaxrl}~\cite{jaxrl}. We primarily followed the author's recommended parameters, where we used the Lagrange multiplier $\lambda = 1.0$ for the Antmaze and Franka Kitchen domains, and $\lambda = 0.3$ for the Adroit domain. In the visual-manipulation domain, we performed a parameter sweep over $\lambda = 0.1, 0.3, 1, 3, 10$ and selected the best-performing value of $\lambda = 1$ for our final results. For ODT, we used the author's official implementation from \url{https://github.com/facebookresearch/online-dt}, with the author's recommended parameters they used in the Antmaze domain. In addition, in support of our result of AWAC and ODT (as shown in Table~\ref{tab:performance}), the poor performance of Decision Transformers and AWAC in the Antmaze domain can also be observed in Table 1 and Table 2 of the IQL paper~\citep{kostrikov2021offlineb}.

\subsection{Details and Hyperparameters for SAC, SAC + Offline Data, Hybrid RL and CQL + SAC}
We used the standard hyperparameters for SAC as derived from the original implementation in~\cite{sac}. We used the same other hyperparameters as CQL and \methodname. We used automatic entropy tuning for the policy and critic entropy terms, with a target entropy of the negative action dimension. For SAC, the agent is only trained with the online explored data. For SAC + Offline Data, the offline data and online explored data is combined together and sampled uniformly. For Hybrid RL, we use the same mixing ratio used for CQL and \methodname\ presented in Table~\ref{table:hyper_all_cql}. For CQL + SAC, we first pre-train with CQL and then run online fine-tuning using both offline and online data, also using the same mixing ratio presented in Table~\ref{table:hyper_all_cql}. 

\begin{table}[H]
\centering
\caption{CQL, \methodname\ Hyperparameters}
\label{table:hyper_all_cql} 
\scalebox{0.80}{
\begin{tabular}{p{3.cm} p{2.cm} p{2.cm}  p{2.cm}  p{2.5cm}}
  \toprule
    \textbf{Hyperparameters} & \textbf{Adroit} & \textbf{Kitchen}  & \textbf{Antmaze} & \textbf{Manipulation}\\         
  \midrule
    $\alpha$ & 1  & 5  & - & 5 (online: 0.5) \\
    target action gap &  -  & - & 0.8 & - \\
    mixing ratio &  -1, 0.25, \textbf{0.5} & -1, \textbf{0.25}, 0.5 & 0.5 & 0.2, \textbf{0.5}, 0.7, 0.9 \\    
  \bottomrule
\end{tabular} }
\end{table}

\begin{table}[H]
\centering
\vspace{-0.7cm}
\caption{IQL Hyperparameters}
\label{table:hyper_all_iql} 
\scalebox{0.80}{
\begin{tabular}{p{3.cm} p{2.cm} p{2.cm}  p{2.cm}  p{2.5cm}}
  \toprule
    \textbf{Hyperparameters} & \textbf{Adroit} & \textbf{Kitchen}  & \textbf{Antmaze} & \textbf{Manipulation}\\         
  \midrule
    expectile $\tau$ & 0.8  &  0.7 & 0.9 & 0.7 \\
    temperature $\beta$ & 3  & 0.5  & 10  & 10  \\
    mixing ratio &  -1, \textbf{0.2}, 0.5 & -1, \textbf{0.25}, 0.5   & 0.5 &  0.2, \textbf{0.5}, 0.7, 0.9 \\
  \bottomrule
\end{tabular} }
\end{table}

\section{D4RL locomotion benchmark}
\label{app:locomotion}
In this section, we further compare \methodname\ with previous methods on the D4RL locomotion benchmarks. Since the dataset for these tasks does not end in a terminal, we estimate the reference value functions using a neural network by fitting a SARSA Q-function to the offline dataset. We present the normalized scores before and after fine-tuning in Table~\ref{tab:locomotion-performance}, and the cumulative regret in Table~\ref{tab:locomotion-results_regret}. Overall, we observe that Cal-QL outperforms state-of-the-art methods such as IQL and AWAC, as well as fast online RL methods that do not employ pre-training (Hybrid RL), performing comparably to CQL.

\begin{table*}[h]
\scriptsize{
\begin{center}

\vspace{-0.1cm}
\!\!\!\!\scalebox{1.0}{
\begin{tabular}{l|c|c|c|c|c}
Task & CQL & IQL & AWAC & Hybrid RL & Cal-QL (Ours) \\ \hline \hline
\texttt{halfcheetah-expert-v2}        & 0.79 $\rightarrow$ 1.02 & 0.95 $\rightarrow$ 0.54 & 0.69 $\rightarrow$ 0.65 & N/A $\rightarrow$ 0.84 & 0.85 $\rightarrow$ 1.00 \\
\texttt{halfcheetah-medium-expert-v2} & 0.59 $\rightarrow$ 1.00 & 0.86 $\rightarrow$ 0.57 & 0.72 $\rightarrow$ 0.77 & N/A $\rightarrow$ 0.86 & 0.54 $\rightarrow$ 0.99 \\
\texttt{halfcheetah-medium-replay-v2} & 0.51 $\rightarrow$ 0.95 & 0.43 $\rightarrow$ 0.36 & 0.43 $\rightarrow$ 0.47 & N/A $\rightarrow$ 0.89 & 0.51 $\rightarrow$ 0.93 \\
\texttt{halfcheetah-medium-v2}         & 0.53 $\rightarrow$ 0.97 & 0.47 $\rightarrow$ 0.43 & 0.49 $\rightarrow$ 0.57 & N/A $\rightarrow$ 0.88 & 0.52 $\rightarrow$ 0.93 \\
\texttt{halfcheetah-random-v2}         & 0.36 $\rightarrow$ 1.01 & 0.11 $\rightarrow$ 0.54 & 0.14 $\rightarrow$ 0.37 & N/A $\rightarrow$ 0.80 & 0.33 $\rightarrow$ 1.04 \\ \hline
\texttt{hopper-expert-v2}           & 1.00 $\rightarrow$ 0.73 & 1.02 $\rightarrow$ 0.52 & 1.12 $\rightarrow$ 1.11 & N/A $\rightarrow$ 0.54 & 0.58 $\rightarrow$ 0.75 \\
\texttt{hopper-medium-expert-v2}    & 0.86 $\rightarrow$ 0.92 & 0.07 $\rightarrow$ 0.81 & 0.30 $\rightarrow$ 1.03 & N/A $\rightarrow$ 1.00 & 0.69 $\rightarrow$ 0.76 \\
\texttt{hopper-medium-replay-v2}    & 0.69 $\rightarrow$ 1.11 & 0.76 $\rightarrow$ 0.86 & 0.70 $\rightarrow$ 1.08 & N/A $\rightarrow$ 0.77 & 0.76 $\rightarrow$ 1.10 \\
\texttt{hopper-medium-v2}           & 0.78 $\rightarrow$ 0.99 & 0.57 $\rightarrow$ 0.26 & 0.58 $\rightarrow$ 0.90 & N/A $\rightarrow$ 1.06 & 0.89 $\rightarrow$ 0.98 \\
\texttt{hopper-random-v2}           & 0.09 $\rightarrow$ 1.08 & 0.08 $\rightarrow$ 0.64 & 0.09 $\rightarrow$ 0.43 & N/A $\rightarrow$ 0.80 & 0.10 $\rightarrow$ 0.79 \\ \hline
\texttt{walker2d-expert-v2}           & 1.08 $\rightarrow$ 1.12 & 1.09 $\rightarrow$ 1.02 & 1.11 $\rightarrow$ 1.24 & N/A $\rightarrow$ 1.12 & 0.92 $\rightarrow$ 1.00 \\
\texttt{walker2d-medium-expert-v2}    & 1.00 $\rightarrow$ 0.56 & 1.09 $\rightarrow$ 1.06 & 0.86 $\rightarrow$ 1.16 & N/A $\rightarrow$ 0.95 & 0.96 $\rightarrow$ 0.77 \\
\texttt{walker2d-medium-replay-v2}    & 0.76 $\rightarrow$ 0.92 & 0.63 $\rightarrow$ 0.95 & 0.60 $\rightarrow$ 0.94 & N/A $\rightarrow$ 1.03 & 0.52 $\rightarrow$ 0.99 \\
\texttt{walker2d-medium-v2}           & 0.80 $\rightarrow$ 1.11 & 0.81 $\rightarrow$ 1.04 & 0.75 $\rightarrow$ 0.98 & N/A $\rightarrow$ 0.86 & 0.75 $\rightarrow$ 1.03 \\
\texttt{walker2d-random-v2}           & 0.05 $\rightarrow$ 0.85 & 0.06 $\rightarrow$ 0.09 & 0.04 $\rightarrow$ 0.04 & N/A $\rightarrow$ 0.90 & 0.04 $\rightarrow$ 0.49 \\ \hline \hline 
\textbf{average} & 0.66	$\rightarrow$	0.96 & 0.60	$\rightarrow$	0.65 & 0.57	$\rightarrow$	0.78 & N/A	$\rightarrow$	0.89 & 0.60	$\rightarrow$	0.90
\end{tabular}

}
\vspace{-0.15cm}
\caption{\footnotesize{Normalized score before \& after online fine-tuning on D4RL locomotion tasks. } \label{tab:locomotion-performance}}

\vspace{-0.3cm}
\end{center}
}
\end{table*}

\begin{table*}[h]

\small{
\begin{center}
\scalebox{0.85}{
\begin{tabular}{l|c|c|c|c|c}
Task & CQL & IQL & AWAC & Hybrid RL & Cal-QL (Ours) \\ \hline \hline 
\texttt{halfcheetah-expert-v2}         & 0.03 & 0.47 & 0.28 & 0.49 & 0.08 \\
\texttt{halfcheetah-medium-expert-v2}  & 0.05 & 0.42 & 0.24 & 0.49 & 0.07 \\
\texttt{halfcheetah-medium-replay-v2}  & 0.14 & 0.65 & 0.56 & 0.29 & 0.16 \\
\texttt{halfcheetah-medium-v2}         & 0.15 & 0.61 & 0.48 & 0.31 & 0.17 \\
\texttt{halfcheetah-random-v2}         & 0.11 & 0.54 & 0.67 & 0.33 & 0.10 \\ \hline
\texttt{hopper-expert-v2}              & 0.17 & 0.58 & -0.07 & 0.37 & 0.22 \\
\texttt{hopper-medium-expert-v2}       & 0.09 & 0.32 & -0.02 & 0.32 & 0.17 \\
\texttt{hopper-medium-replay-v2}       & 0.09 & 0.09 & 0.05 & 0.27 & 0.12 \\
\texttt{hopper-medium-v2}              & 0.04 & 0.73 & 0.08 & 0.26 & 0.05 \\
\texttt{hopper-random-v2}              & 0.17 & 0.54 & 0.74 & 0.37 & 0.27 \\ \hline
\texttt{walker2d-expert-v2}           & -0.02 & 0.15 & -0.18 & 0.11 & 0.15 \\
\texttt{walker2d-medium-expert-v2}    & 0.03 & 0.02 & -0.14 & 0.17 & 0.10 \\
\texttt{walker2d-medium-replay-v2}    & 0.06 & 0.10 & 0.19 & 0.30 & 0.13 \\
\texttt{walker2d-medium-v2}           & 0.11 & 0.12 & 0.07 & 0.29 & 0.22 \\
\texttt{walker2d-random-v2}           & 0.53 & 0.92 & 0.96 & 0.56 & 0.49 \\ \hline \hline
\textbf{average} & 0.12 & 0.42 & 0.26 & 0.33 & 0.17
\end{tabular}
}
\caption{\footnotesize{Cumulative regret on D4RL locomotion tasks. The smaller the better and 1.00 is the worst.}}
\label{tab:locomotion-results_regret}

\end{center}}
\end{table*}

\section{Extended Discussion on Limitations of Existing Fine-Tuning Methods}
\label{appendix:more_discussion_on_finetuning}

In this section, we aim to highlight some potential reasons behind the slow improvement of other methods in our empirical analysis experiment in Section~\ref{sec:empirical_analysis}, and specifically, we use IQL for the analysis. We first swept over the temperature $\beta$ values used in the online fine-tuning phase for IQL, which controls the constraint on how closely the learned policy should match the behavior policy. As shown in Figure~\ref{fig:beta_ablation}, the change in the temperature $\beta$ has little to no effect on the sample efficiency. Another natural hypothesis is that IQL improves slowly because we are not making enough updates per unit of data collected by the environment. To investigate this, we ran IQL with \textbf{(a)} five times as many gradient steps per step of data collection ($\text{UTD}=5$), and \textbf{(b)} with a more aggressive policy update. Observe in Figure~\ref{fig:cql_iql_finetune_app} that \textbf{(a)} does not improve the asymptotic performance of IQL, although it does improve CQL meaning that there is room for improvement on this task by making more gradient updates. Observe in Figure~\ref{fig:cql_iql_finetune_app} that \textbf{(b)} often induces policy unlearning, similar to the failure mode in CQL. These two observations together indicate that a policy constraint approach can slow down learning asymptotically, and we cannot increase the speed by making more aggressive updates as this causes the policy to find erroneously optimistic out-of-distribution actions, and unlearn the policy learned from offline data. 

\begin{figure}[h]
\begin{center}
\centerline{\includegraphics[width=0.45\textwidth]{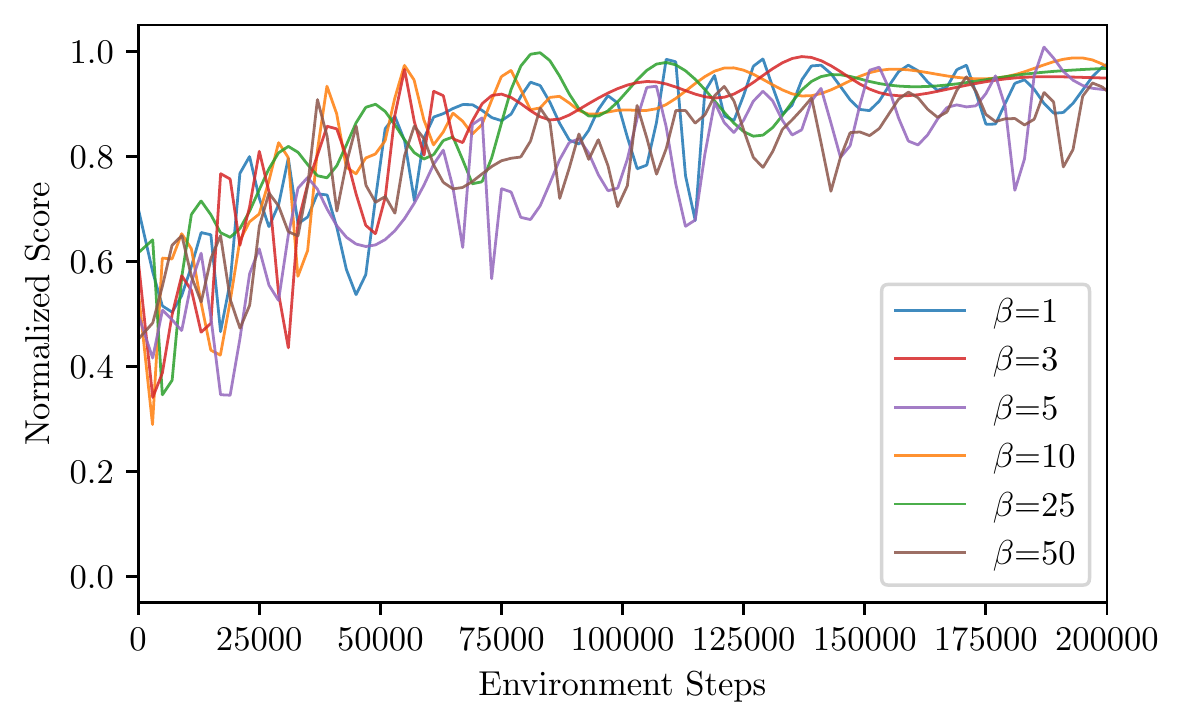}}

\caption{\label{fig:beta_ablation}\footnotesize{\textbf{Abalation on IQL's online temperature values}: The change in the temperature $\beta$ used in online fine-tuning phase has little to no effect on the sample efficiency.}}
\end{center}
\vspace{-0.8cm}

\end{figure}

\begin{figure}[h]

\begin{center}
\centerline{\includegraphics[width=0.7\textwidth]{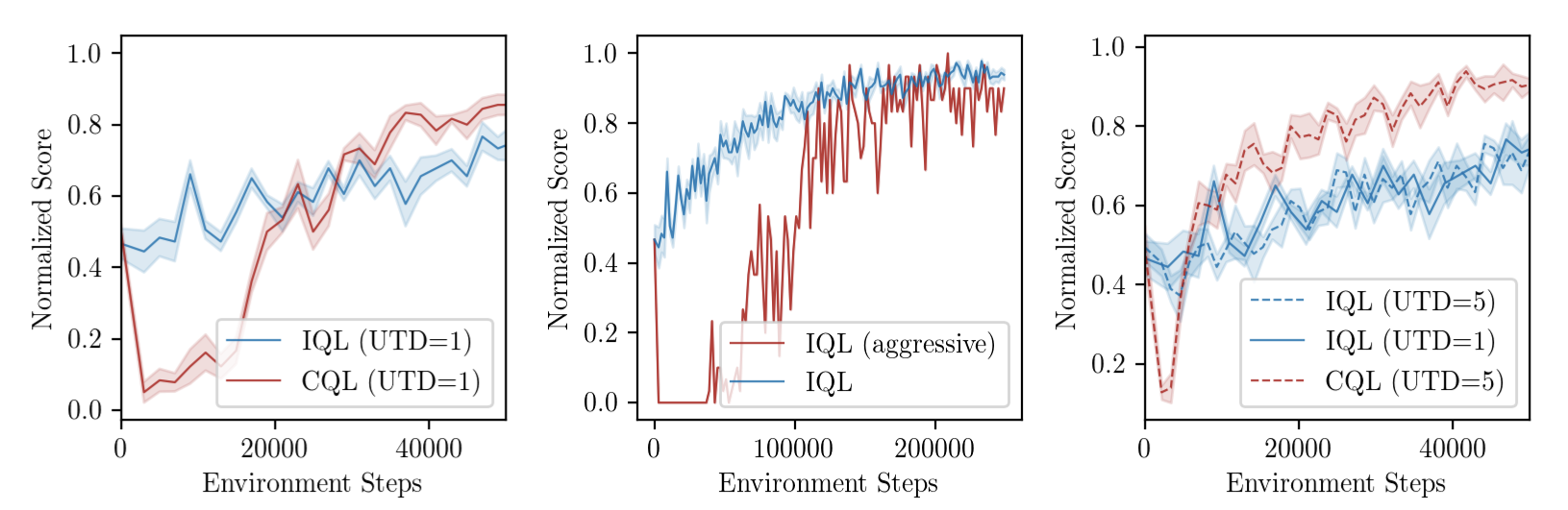}}

\caption{\label{fig:cql_iql_finetune_app}\footnotesize{\textbf{IQL and CQL:} Step 0 on the x-axis is the performance after offline pre-training. Observe while CQL suffers from initial policy unlearning, IQL improves slower throughout fine-tuning.}}

\end{center}
\vspace{-0.9cm}

\end{figure}

\section{Initial Unlearning of CQL on Multiple Tasks}
\label{app:cql_dip_zoom_in}
\begin{figure}[h]
\vspace{-0.5cm}

\begin{center}    
{\includegraphics[clip,width=1\linewidth]{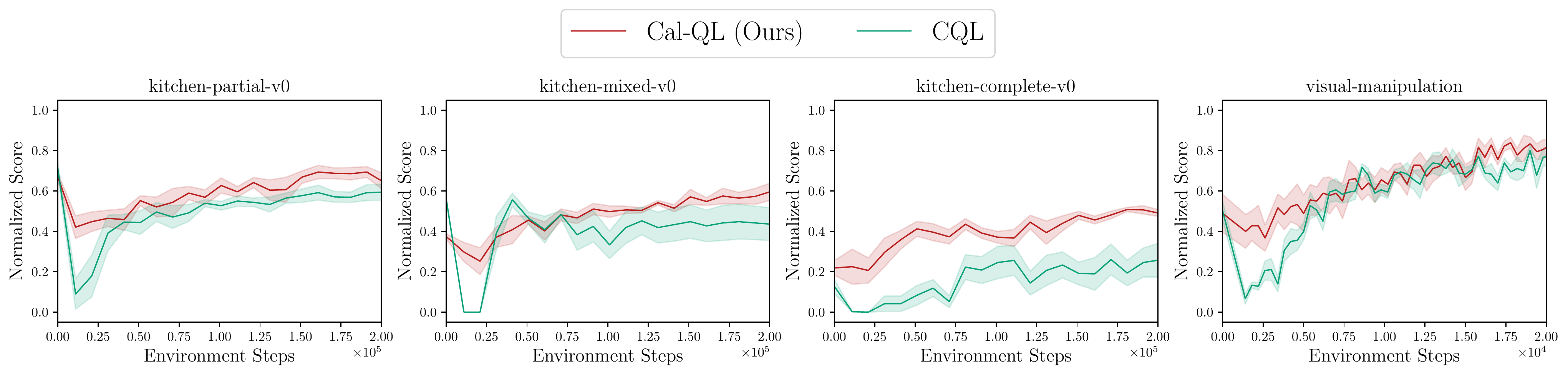}} {\includegraphics[clip,width=0.75\linewidth]{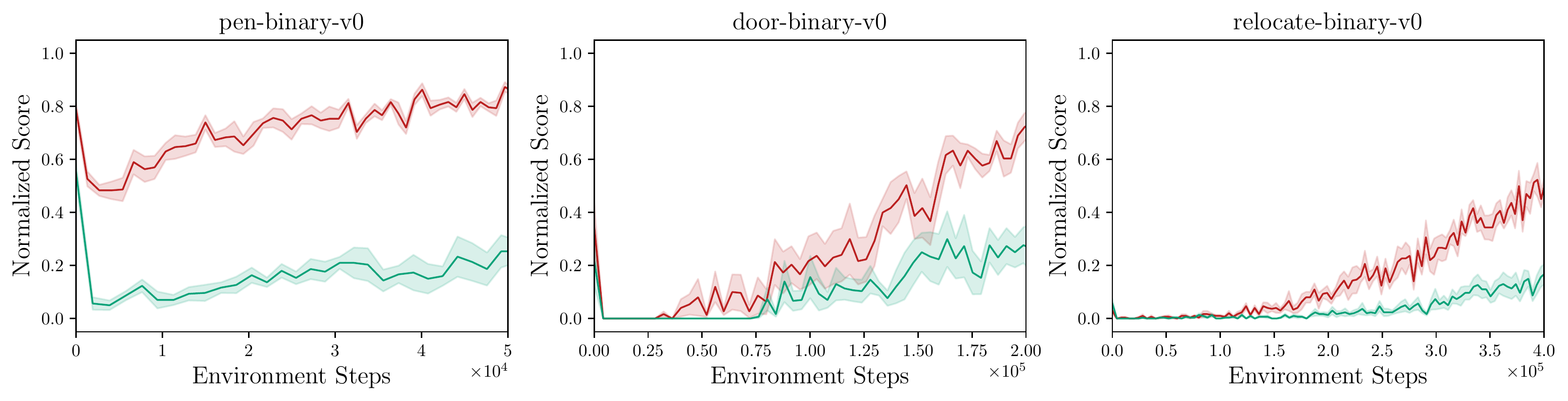}}
\end{center}
\vspace{-0.45cm}
\caption{\label{fig:dip_zoom} \footnotesize{While CQL experiences initial unlearning, \methodname\ effectively mitigates it and quickly recovers its performance.}}
\end{figure}
In this section, we show the learning curves of CQL and \methodname\ from Figure~\ref{fig:all_tasks} and zoom in on the x-axis to provide a clearer visualization of CQL's initial unlearning in the Franka Kitchen, Adroit, and the visual-manipulation domains. As depicted in Figure~\ref{fig:dip_zoom}, it is evident across all tasks that while CQL experiences initial unlearning, \methodname\ can effectively mitigate it and quickly recovers its performance. Regarding the Antmaze domain, as we discussed in section~\ref{subsec:diagonistic}, CQL does not exhibit initial unlearning since the default dataset has a high coverage of data. However, we can observe a similar phenomenon if we narrow down the dataset distribution (as shown in Figure~\ref{fig:ant-narrow}).

\section{Ablation Study over Mixing Ratio Hyperparameter}
\label{appendix:hype-ablations}
Here we present an ablation study over mixing ratio on the adroit door-binary task for Cal-QL (left) and IQL (right). Perhaps as intuitively expected, we generally observe the trend that a larger mixing ratio (i.e., more offline data) exhibits slower performance improvement during online fine-tuning. 

\begin{figure}[h]
\begin{center}
\centerline{\includegraphics[width=\textwidth]{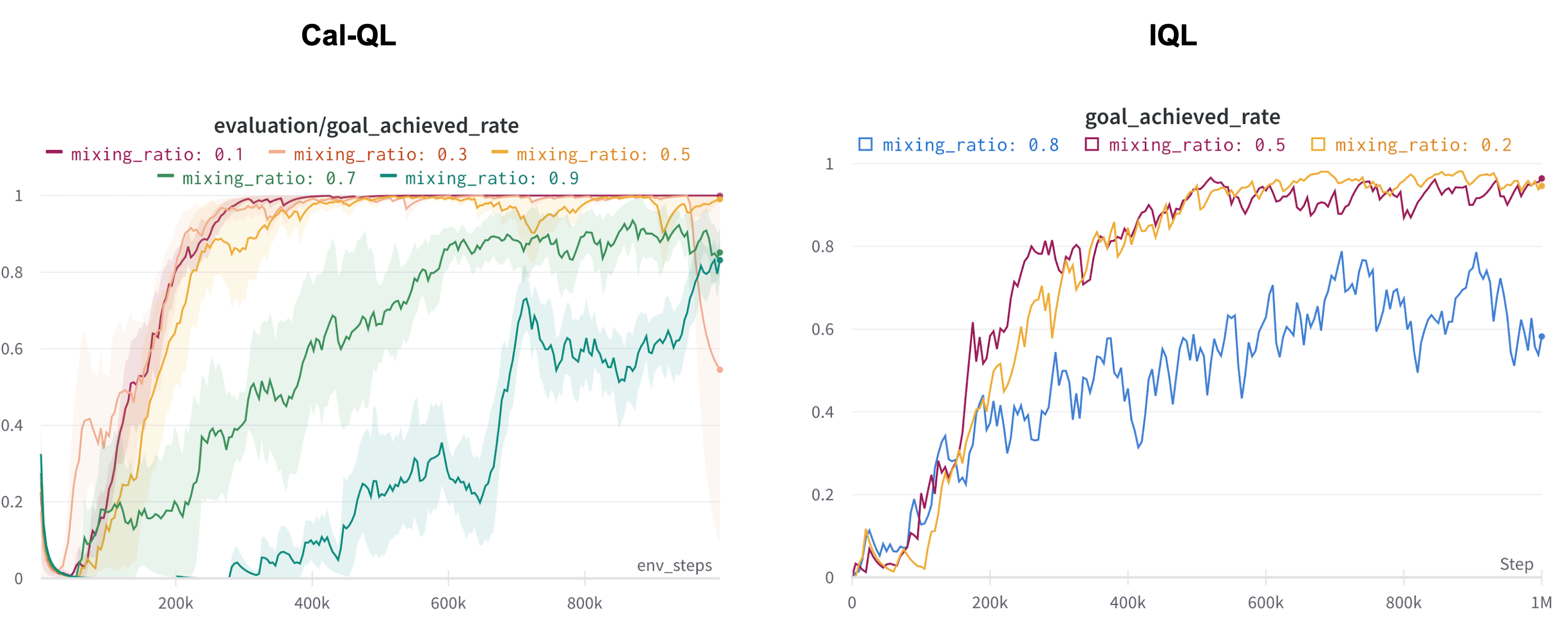}}

\caption{\label{fig:mixing_ablation}\footnotesize{Ablation study over mixing ratio on the adroit door-binary task for Cal-QL (left) and IQL (right).} }
\end{center}
\vspace{-0.8cm}

\end{figure}

\section{Regret Analysis of \methodname}
We provide a theoretical version of \methodname\ in Algorithm~\ref{algo:CalQL-thm}. Policy fine-tuning has been studied in different settings~\citep{xie2021policy,song2023hybrid,wagenmaker2022leveraging}. Our analysis largely adopts the settings and results in \citet{song2023hybrid}, with additional changes in Assumption~\ref{assump:conservative-realizability-completeness}, Assumption~\ref{assump:conservative-bilinear-rank}, and Definition~\ref{def:bellman-error-coeff-ref}. Note that the goal of this proof is to demonstrate that
{\em a pessimistic functional class (Assumption~\ref{assump:conservative-realizability-completeness})} allows one to utilize the offline data efficiently, rather than providing a new analytical technique for regret analysis. See comparisons between Section~\ref{subsec:CalQL-assumptions} and Section~\ref{subsec:HyQ-assumptions}. Note that we use $f$ instead of $Q_\theta$ in the main text to denote the estimated $Q$ function for notation simplicity.

\begin{algorithm}[H]

\caption{Theoretical version of \methodname}\label{algo:CalQL-thm}
  \begin{algorithmic}[1]
    \State \textbf{Input:} Value function class $\mc F$, \# total iterations $K$, offline dataset $\mc D_h^\nu$ of size $\moff$ for $h\in[H-1]$.
    \State Initialize $f_h^1(s,a)=0,\forall (s,a)$.
    \For{$k=1,\dots,K$}
    \State Let $\pi^t$ be the greedy policy w.r.t. $f^k$ \Comment{I.e., $\pi^k_h(s)=\arg\max_af_h^k(s,a)$.}
    \State For each $h$, collect $\mon$ online tuples $\mc D_h^k\sim d_h^{\pi^k}$ \Comment{online data collection}
    \State Set $f_H^{k+1}(s,a)=0,\forall (s,a)$.

    \For{$h=H-1,\dots 0$} \Comment{FQI with offline and online data}
        \State Estimate $f_h^{k+1}$ using {\color{purple} conservative} least squares on the aggregated data: {\color{purple} \Comment{ I.e., CQL regularized class $\mc C_h$}}
        {\scriptsize
        \begin{equation}
            \label{eq:least-square-our-method}f_h^{k+1}\leftarrow\underset{f\in {\color{purple}\mc C_h}}{\arg\min}\Brac{\EmpiricalOffline\brac{f(s,a)-r-\max_{a^\prime}f_{h+1}^{k+1}(s^\prime,a^\prime)}^2 + \sum_{\tau=1}^K\EmpiricalOnline\brac{f(s,a)-r-\max_{a^\prime}f_{h+1}^{k+1}(s^\prime,a^\prime)}^2}  
        \end{equation}}
        \State {\color{purple} $f_h^{k+1} = \max\{f_h^{k+1}, Q^\mr{ref}_h\}$ \Comment{Set a reference policy for calibration (Definition~\ref{cond:calibration})} }
    \EndFor
    \EndFor
    \State \textbf{Output:} $\pi^K$
  \end{algorithmic}
\end{algorithm}

\subsection{Preliminaries}
In this subsection, we follow most of the notations and definitions in~\citet{song2023hybrid}. In particular, we consider the finite horizon cases, where the value function and Q function are defined as:
\begin{align}
    V_h^\pi(s) &= \bb E\brac{\sum_{\tau=h}^{H-1}r_\tau|\pi, s_h=s}\\
    Q_h^\pi(s,a) &= \bb E\brac{\sum_{\tau=h}^{H-1}r_\tau|\pi,s_h=s,a_h=a}.
\end{align}
We also define the Bellman operator $\mc T$ such that $\forall f:\mc S\times \mc A$:
\begin{equation}
    \mc T f(s,a) = \bb E_{s,a}[R(s,a)] + \bb E_{s^\prime \sim P(s,a)}\max_{a^\prime} f(s^\prime, a^\prime),\;\forall (s,a)\in \mc S\times \mc A,
\end{equation}
where $R(s,a)\in \Delta[0,1]$ represents a stochastic reward function.

\subsection{Notations}
\label{appendix:notations}
\begin{itemize}
    \item Feature covariance matrix $\Sigma_{k;h}$: 
    \begin{equation}
    \label{eq:covariance-matrix}
        \mb \Sigma_{k;h}=\sum_{\tau=1}^kX_h(f^\tau)(X_h(f^\tau))^\top+\lambda \mb I
    \end{equation}
    \item Matrix Norm \citet{zanette2021provable}: for a matrix $\Sigma$, the matrix norm $\norm{\mb u}{\mb \Sigma}$ is defined as:
    \begin{equation}
        \norm{\mb u}{\mb \Sigma}=\sqrt{\mb u\mb \Sigma \mb u^\top}
    \end{equation}
    \item Weighted $\ell^2$ norm: for a given distribution $\beta\in\Delta(\mc S\times \mc A)$ and a function $f:\mc S\times \mc A\mapsto \bb R$, we denote the weighted $\ell^2$ norm as:
    \begin{equation}
        \norm{f}{2,\beta}^2:=\sqrt{\bb E_{(s,a)\sim \beta}f^2(s,a)}
    \end{equation}
    \item A stochastic reward function $R(s,a)\in \Delta([0,1])$
    \item For each offline data distribution $\nu = \{\nu_0,\dots,\nu_{H-1}\}$, the offline data set at time step $h$ ($\nu_h$) contains data samples $(s,a,r,s^\prime)$, where $(s,a)\sim\nu_h,r\in R(s,a),s^\prime\sim P(s,a)$.
    \item Given a policy $\pi:=\{\pi_0,\dots,\pi_{H-1}\}$, where $\pi_h:\mc S\mapsto \Delta (\mc A)$, $d^\pi_h\in\Delta(s,a)$ denotes the state-action occupancy induced by $\pi$ at step $h$.
    \item We consider the value-based function approximation setting, where we are given a function class $\mc C=\mc C_0\times\dots\mc C_{H-1}$ with $\mc C_{h}\subset\mc S\times \mc A\mapsto[0, \Vmax]$.
    \item A policy $\pi^f$ is defined as the greedy policy w.r.t. $f$: $\pi_h^f(s)=\arg\max_af_h(s,a)$. Specifically, at iteration $k$, we use $\pi^k$ to denote the greedy policy w.r.t. $f^k$.
\end{itemize}

\subsection{Assumptions and Defintions}
\label{subsec:CalQL-assumptions}
\begin{assumption}[Pessimistic Realizability and Completeness] 
\label{assump:conservative-realizability-completeness}
For any policy $\pi^e$, we say $\mc C_h$ is a pessimistic function class w.r.t. $\pi^e$, if for any $h$, we have $Q^{\pi^e}_h\in \mc C_h$, and additionally, for any $f_{h+1}\in \mc C_{h+1}$, we have $\mc T f_{h+1}\in \mc C_h$ and $f_h(s,a)\leq Q_h^{\pi^e}(s,a),\forall (s,a)\in \mc S\times \mc A$.
\end{assumption}

\begin{definition}[Bilinear model~\citet{du2021bilinear}]
\label{def:bilinear-model}
We say that the MDP together with the function class $\mc F$ is a bilinear model of rand $d$ of for any $h\in [H-1]$, there exist two (known) mappings $X_h,W_h:\mc F\mapsto \bb R^d$ with $\max_f\norm{X_h(f)}{2}\leq B_X$ and $\max_f\norm{W_h(f)}{2}\leq B_W$ such that 
\begin{equation}
    \label{eq:bilinear-model}
    \forall f, g \in \mc F:\quad \abs{\bb E_{s,a\sim d_h^{\pi^f}} [g_h(s,a)-\mc Tg_{h+1}(s,a)]} = \abs{\innerprod{X_h(f)}{W_h(g)}}.
\end{equation}
\end{definition}

\begin{assumption}[Bilinear Rank of Reference Policies]
\label{assump:conservative-bilinear-rank}
Suppose $Q^{\mr{ref}}\in\mc C_\mr{ref}\subset\mc C$, where $\mc C_\mr{ref}$ is the function class of our reference policy, we assume the Bilinear rank of $\mc C_\mr{ref}$ is $d_\mr{ref}$ and $d_\mr{ref}\leq d$.
\end{assumption}

\begin{definition}[Calibrated Bellman error transfer coefficient]
\label{def:bellman-error-coeff-ref}
For any policy $\pi$, we define the calibrated transfer coefficient w.r.t. to a reference policy $\pi^\mr{ref}$ as 
\begin{equation}
    C^\mr{ref}_\pi:=\max_{f\in \mc C,f(s,a)\geq Q^\mr{ref}(s,a)}\frac{\sum_{h=0}^{H-1}\bb E_{s,a\sim d_h^\pi}[\mc T f_{h+1}(s,a)-f_h(s,a)]}{\sqrt{\sum_{h=0}^{H-1}\bb E_{s,a\sim \nu_h}(\mc T f_{h+1}(s,a)-f_h(s,a))^2}},
\end{equation}
where $Q^\mr{ref} = Q^{\pi^\mr{ref}}$.
\end{definition}

\subsection{Discussions on the Assumptions}
The pessimistic realizability and completeness assumption (Assumption~\ref{assump:conservative-realizability-completeness}) is motivated by some theoretical studies of the pessimistic offline methods~\cite{xie2021bellman,cheng2022adversarially} with regularizers:

\begin{align}
    \label{eq:dual-conservative-formulation}
    \min_\theta  {\alpha \underbrace{\left(\mathbb{E}_{s \sim \mathcal{D}, a \sim \pi} \left[Q_\theta(s,a)\right] - \mathbb{E}_{s, a \sim \mathcal{D}}\left[Q_\theta(s,a)\right]\right)}_{\text{Conservative regularizer }\mathcal{R}(\theta)}} + \frac{1}{2} {\mathbb{E}_{s, a, s^\prime\sim \mathcal{D}}\left[\left(Q_\theta(s, a) - \bellman^\pi\bar{Q}(s, a)\right)^2 \right]}.
\end{align}
Since the goal of the conservative regularizer $\mc R(\theta)$ intrinsically wants to enforce
\begin{equation}
    Q_\theta(s,\pi(s)) \leq Q_\theta(s,\pi^e(s)),
\end{equation}
where $\pi$ is the training policy and $\pi^e$ is the reference (behavior) policy. One can consider \eqref{eq:dual-conservative-formulation} as the Lagrange duality formulation of the following primal optimization problem:{\small
\begin{equation}
    \min_\theta    {\mathbb{E}_{s, a, s^\prime\sim \mathcal{D}}\left[\left(Q_\theta(s, a) - \bellman^\pi\bar{Q}(s, a)\right)^2 \right]},\;\text{subject to}\; \mathbb{E}_{s \sim \mathcal{D}, a \sim \pi} \left[Q_\theta(s,a)\right] \leq \mathbb{E}_{s\sim \mathcal{D},a\sim\pi^e}\left[Q_\theta(s,a)\right],
\end{equation}
}
where the constraint set is equivalent to Assumption~\ref{assump:conservative-realizability-completeness}. Although Assumption~\ref{assump:conservative-realizability-completeness} directly characterizes the constraint set of the primal form of~\eqref{eq:dual-conservative-formulation} the exact theoretical connection between the pessimistic realizability and the regularized bellman consistency equation is beyond the scope of this work and we would like to leave that for future studies.

Assumption~\ref{assump:conservative-realizability-completeness} allows us to restrict the functional class of interest to a smaller conservative function class $\mc C\subset\mc F$, which leads to a smaller Bellman rank of the reference policy $(d_\mr{ref}\leq d)$ suggested in Assumption~\ref{assump:conservative-bilinear-rank}, and a smaller concentrability coefficient $(C^\mr{ref}_\pi\leq C_\pi)$ defined in Definition~\ref{def:bellman-error-coeff-ref}, and \ref{def:bellman-error-coeff-hy-q}. Assumption~\ref{assump:conservative-bilinear-rank} and Definition~\ref{def:bellman-error-coeff-hy-q} provide the Bellman Bilinear rank and Bellman error transfer coefficient of the pessimistic functional class $\mc C$ of interest.

\subsection{Proof Structure Overview}
We provide an overview of the proof structure and its dependency on different assumptions below:
\begin{itemize}
    \item Theorem~\ref{thm:CalQL-regret-main}: the total regret is decomposed into {\em offline regrets} and {\em online regrets}.
    \begin{itemize}
        \item Bounding {\em offline regrets}, requiring Definition~\ref{def:bellman-error-coeff-ref} and the following lemmas:
        \begin{itemize}
            \item Performance difference lemma w.r.t. a comparator policy (Lemma~\ref{lemma:perf-diff-comparator}).
            \item Least square generalization bound (Lemma~\ref{lemma:bellman-error-bound}), requiring Assumption~\ref{assump:conservative-realizability-completeness}.
        \end{itemize}
        \item Bounding {\em online regrets}, requiring Definition~\ref{def:bilinear-model}
        \begin{itemize}
            \item Performance difference lemma for the online error (Lemma~\ref{lemma:perf-diff-online}).
            \item Least square generalization bound (Lemma~\ref{lemma:bellman-error-bound}), requiring Assumption~\ref{assump:conservative-realizability-completeness}.
            \item Upper bounds with the bilinear model assumption (Lemma~\ref{lemma:bilinear-upper-bound}).
            \item Applying Elliptical Potential Lemma~\cite{lattimore2020bandit} with bellman rank $d$ and $d_\mr{ref}$ (Lemma~\ref{lemma:bound-inverse-cov-norm}), requiring Assumption~\ref{assump:conservative-bilinear-rank}.
        \end{itemize}
    \end{itemize}
\end{itemize}

\subsection{Our Results}
\label{appdendix:derivation-CalQL}
\begin{theorem}[Formal Result of Theorem~\ref{thm:main-thm-informal}]
\label{thm:CalQL-regret-main}
    Fix $\delta\in(0,1),\moff=K,\mon=1$, suppose and the function class $\mc C$ follows Assumption~\ref{assump:conservative-realizability-completeness} w.r.t. $\pi^e$. Suppose the underlying MDP admits Bilinear rank $d$ on function class $\mc C$ and $d_\mr{ref}$ on $\mc C_\mr{ref}$, respectively, then with probability at least $1-\delta$, Algorithm~\ref{algo:CalQL-thm} obtains the following bound on cumulative suboptimality w.r.t. any comparator policy $\pi^e$:{\small
    \begin{equation}
        \begin{split}
        \sum_{t=1}^KV^{\pi^e}-V^{\pi^k} =\wt{O}\paren{\min\Brac{C_{\pi^e}^\mr{ref}H\sqrt{dK\log\paren{|\mc F|/\delta}},\;K\paren{V^{\pi^e}-V^\mr{ref}}+H\sqrt{d_\mr{ref}K\log\paren{|\mc F|/\delta}}}}.
        \end{split}
    \end{equation}
    }
\end{theorem}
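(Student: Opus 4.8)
The plan is to follow the hybrid-RL regret analysis of~\citet{song2023hybrid} but to exploit the calibration floor $f_h^k \geq Q^\mr{ref}_h$ enforced by the clipping step in Algorithm~\ref{algo:CalQL-thm} to produce \emph{two} separate upper bounds on the cumulative suboptimality and then take their minimum. First I would decompose the per-round gap exactly as in the miscalibration/overestimation split of Equation~\ref{eq:regret-decomposition}, writing $V^{\pi^e}(s_0) - V^{\pi^k}(s_0) = \paren{V^{\pi^e}(s_0) - \max_a f_0^k(s_0,a)} + \paren{\max_a f_0^k(s_0,a) - V^{\pi^k}(s_0)}$, and use that $\pi^k$ is greedy w.r.t. $f^k$ to bound the first term by $V^{\pi^e}(s_0) - f_0^k(s_0,\pi^e)$. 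Applying the performance-difference lemmas (Lemma~\ref{lemma:perf-diff-comparator} for the comparator term, Lemma~\ref{lemma:perf-diff-online} for the self term) converts these into the accumulated Bellman errors $\sum_h \bb E_{d_h^{\pi^e}}\brac{\mc T f_{h+1}^k - f_h^k}$ (offline part) and $\sum_h \bb E_{d_h^{\pi^k}}\brac{f_h^k - \mc T f_{h+1}^k}$ (online part). The shared engine is the least-squares generalization bound (Lemma~\ref{lemma:bellman-error-bound}), which — invoking the pessimistic realizability and completeness of Assumption~\ref{assump:conservative-realizability-completeness} so that fitted-Q stays well-posed inside the conservative class $\mc C$ — controls the squared Bellman error of $f^{k+1}$ both under the offline distribution $\nu_h$ and under every online rollout distribution $d_h^{\pi^\tau}$, $\tau \leq k$.

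For the first branch I would transfer the offline Bellman error to the comparator occupancy via the calibrated concentrability coefficient $C^\mr{ref}_{\pi^e}$ of Definition~\ref{def:bellman-error-coeff-ref}; since that coefficient maximizes only over calibrated functions $f \geq Q^\mr{ref}$, it satisfies $C^\mr{ref}_{\pi^e}\leq C_{\pi^e}$, and combined with the offline least-squares error (recall $\moff = K$) it contributes the $C^\mr{ref}_{\pi^e}$-dependent term $\wt{O}\paren{C^\mr{ref}_{\pi^e} H\sqrt{K\log\paren{|\mc F|/\delta}}}$. For the online term I would use the bilinear factorization (Definition~\ref{def:bilinear-model}) to write each summand as $\abs{\innerprod{X_h(f^k)}{W_h(f^k)}}$, bound $\norm{W_h(f^k)}{\Sigma_{k;h}}$ by the accumulated online least-squares error (Lemma~\ref{lemma:bilinear-upper-bound}), and close the sum with the elliptical-potential lemma at bilinear rank $d$ (Lemma~\ref{lemma:bound-inverse-cov-norm}), giving $\wt{O}\paren{H\sqrt{dK\log\paren{|\mc F|/\delta}}}$. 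Absorbing the smaller offline contribution yields the first branch $\wt{O}\paren{C^\mr{ref}_{\pi^e}H\sqrt{dK\log\paren{|\mc F|/\delta}}}$.

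For the second branch I would change the comparator from $\pi^e$ to the reference policy by paying the additive gap, i.e. $\sum_k\paren{V^{\pi^e}-V^{\pi^k}} = K\paren{V^{\pi^e}-V^\mr{ref}} + \sum_k\paren{V^\mr{ref}-V^{\pi^k}}$. The crux is that for the reference comparator the miscalibration term is automatically non-positive: since $f_0^k(s_0,\pi^\mr{ref}) \geq Q^\mr{ref}_0(s_0,\pi^\mr{ref}) = V^\mr{ref}(s_0)$ by the clipping constraint (matching Definition~\ref{cond:calibration}), the entire offline/concentrability term drops out \emph{without} ever invoking $C^\mr{ref}$. Only the online overestimation term $\sum_k\sum_h \bb E_{d_h^{\pi^k}}\brac{f_h^k - \mc T f_{h+1}^k}$ survives, and I would bound it with the same potential argument but restricted to the reference class $\mc C_\mr{ref}$, whose smaller bilinear rank $d_\mr{ref}\leq d$ (Assumption~\ref{assump:conservative-bilinear-rank}) is what Lemma~\ref{lemma:bound-inverse-cov-norm} supplies, giving $\wt{O}\paren{H\sqrt{d_\mr{ref}K\log\paren{|\mc F|/\delta}}}$ and hence the second branch $K\paren{V^{\pi^e}-V^\mr{ref}}+\wt{O}\paren{H\sqrt{d_\mr{ref}K\log\paren{|\mc F|/\delta}}}$. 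Because the true suboptimality is bounded by each branch simultaneously, it is bounded by their minimum, as claimed.

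The main obstacle I anticipate is the second branch, specifically justifying that the online overestimation can be charged to the reference rank $d_\mr{ref}$ rather than the full rank $d$: the iterates $f^k$ live in $\mc C$ rather than $\mc C_\mr{ref}$, so one must argue that competing only against $\pi^\mr{ref}$ confines the effective feature directions $X_h(f^k)$ to the reference model and that the calibration floor is precisely what legitimizes this restriction. The second delicate point is making the least-squares bound and Bellman completeness coexist with the clipping step $f_h^{k+1}=\max\{f_h^{k+1},Q^\mr{ref}_h\}$, so that the regressed backups remain inside the conservative class $\mc C$; everything else amounts to bookkeeping of the $H$, $\sqrt{K}$, and $\log\paren{|\mc F|/\delta}$ factors following~\citet{song2023hybrid}.
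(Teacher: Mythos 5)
Your ingredient list matches the paper's (the two performance-difference lemmas, the least-squares bound of Lemma~\ref{lemma:bellman-error-bound}, the calibrated transfer coefficient of Definition~\ref{def:bellman-error-coeff-ref}, and the bilinear/elliptical-potential machinery), but the architecture of your argument --- proving \emph{two separate bounds, each valid over all $K$ rounds}, and then taking their minimum --- is not the paper's, and it does not go through. The paper instead performs a \emph{single} decomposition in which every round is classified by whether the calibration clip binds, i.e., by the event $\mc E_k = \Brac{f_0^k(s,a)\leq Q^\mr{ref}(s,a)}$, with $K_2=\sum_{k}\indicator{\mc E_k}$ clipped rounds and $K_1=K-K_2$ unclipped rounds. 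Clipped rounds contribute \emph{exactly} $V^{\pi^e}-V^\mr{ref}$ as their comparator term (after line 9 of Algorithm~\ref{algo:CalQL-thm}, $\max_a f_0^k=V^\mr{ref}$ on those rounds) plus an online term charged at rank $d_\mr{ref}$; unclipped rounds contribute the $C^\mr{ref}_{\pi^e}$/offline-least-squares term plus an online term at rank $d$. The stated minimum then follows from $K_1,K_2\leq K$ together with a case analysis on $K_1$ versus $K_2$, not from two uniformly valid global bounds.

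Both of your global branches fail precisely on the rounds that your branch ignores. Branch 2 (you flag this yourself): the iterates $f^k$ are produced by the algorithm independently of which comparator you analyze, so ``competing only against $\pi^\mr{ref}$'' cannot confine the feature vectors $X_h(f^k)$ to the rank-$d_\mr{ref}$ model; on unclipped rounds $f^k$ is a generic element of $\mc C$, and Lemma~\ref{lemma:bound-inverse-cov-norm} can only be invoked at rank $d$ there. The paper never needs such an argument, because it applies Assumption~\ref{assump:conservative-bilinear-rank} only on clipped rounds, where $f_0^k$ coincides with $Q^\mr{ref}\in\mc C_\mr{ref}$. Branch 1 has a symmetric, unacknowledged gap: on clipped rounds the miscalibration term equals $V^{\pi^e}-V^\mr{ref}$, and you cannot route it through $C^\mr{ref}_{\pi^e}$ and Lemma~\ref{lemma:bellman-error-bound}, because that lemma controls the Bellman error of the \emph{pre-clip} regression output only; where the clip binds the post-clip function equals $Q^\mr{ref}$, and $\mc T Q^\mr{ref}_{h+1}-Q^\mr{ref}_h$ is not small unless $\pi^\mr{ref}$ is near-optimal. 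Hence, when the reference policy is suboptimal and clipping binds on a constant fraction of rounds, your first branch is false as a bound on the whole sum (it would force $V^{\pi^e}-V^\mr{ref}\to 0$), and your second branch is unproven. The repair is exactly the paper's event-based partition: once you adopt it, your two branches become the extreme cases $K_2=0$ and $K_1=0$, and intermediate regimes are handled by the $K_1$-versus-$K_2$ case analysis rather than by a minimum of global bounds.
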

\noindent Note that Theorem~\ref{thm:CalQL-regret-main} provides a guarantee for {\em any} comparator policy $\pi^\e$, which can be directly applied to $\pi^\star$ described in our informal result (Theorem~\ref{thm:main-thm-informal}). We also change the notation
for the reference policy from $\mu$ in Theorem~\ref{thm:main-thm-informal} to $\pi^\mr{ref}$ (similarly, $d_\mr{ref},V^\mr{ref},C_{\pi^e}^\mr{ref}$ correspond to $d_\mu,V^\mu,C_{\pi^e}^\mu$ in Theorem~\ref{thm:main-thm-informal}) for notation consistency in the proof. Our proof of Theorem~\ref{thm:CalQL-regret-main} largely follows the proof of Theorem 1 of~\citep{song2023hybrid}, and the major changes are caused by Assumption~\ref{assump:conservative-realizability-completeness}, Assumption~\ref{assump:conservative-bilinear-rank}, and Definition~\ref{def:bellman-error-coeff-ref}.
\begin{proof}
    \label{proof:CalQL-main-thm}
    Let $\mc E_k $ denote the event that $\Brac{f_0^k(s,a)\leq Q^\mr{ref}(s,a)}$ and $\bar{\mc E}_k $ denote the event that $\Brac{f_0^k(s,a) > Q^\mr{ref}(s,a)}$. Let $V^\mr{ref}(s) = \max_aQ^\mr{ref}(s,a)$, we start by noting that
\begin{equation}
    \label{eq:conservative-main-thm-derivation0}
    \begin{split}
        &\sum_{k=1}^KV^{\pi^e}-V^{\pi^{f^k}} = \sum_{k=1}^K\bb E_{s\sim\rho}\brac{V_0^{\pi^e}(s)-V^{\pi^{f^k}}_0(s)}\\
        =&\;\underbrace{\sum_{k=1}^K\bb E_{s\sim\rho}\brac{\indicator{\bar{\mc E}_k}\paren{V_0^{\pi^e}(s)-V^{\mr{ref}}(s)}}}_{\Gamma_0}+\underbrace{\sum_{k=1}^K\bb E_{s\sim\rho}\brac{\indicator{\bar{\mc E}_k}\paren{V^{\mr{ref}}(s)-\max_a f_0^k(s,a)}}}_{=0, \text{ by the definition of }\bar{\mc E}_k\text{ and line 9 of Algorithm~\ref{algo:CalQL-thm}}}\\
        &+\underbrace{\sum_{t=1}^K\bb E_{s\sim\rho}\brac{\indicator{\bar{\mc E}_k}\paren{\max_a f_0^k(s,a)-V^{\pi^{f^k}}_0(s)}}}_{\Gamma_1}+\underbrace{\sum_{k=1}^K\bb E_{s\sim\rho}\brac{\indicator{{\mc E}_k}\paren{V_0^{\pi^e}(s)-\max_a f_0^k(s,a)}}}_{\Gamma_2}\\
        &+\underbrace{\sum_{t=1}^T\bb E_{s\sim\rho}\brac{\indicator{{\mc E}_k}\paren{\max_a f_0^k(s,a)-V^{\pi^{f^k}}_0(s)}}}_{\Gamma_3}.\\
    \end{split}
\end{equation}
Let $K_1 = \sum_{k=1}^K\indicator{f_0^k(s,a)>Q^\mr{ref}(s,a)}$ and $K_2 = \sum_{k=1}^K\indicator{f_0^k(s,a)\leq Q^\mr{ref}(s,a)}$ (or equivalently $K_1 = \sum_{k=1}^K\indicator{\bar{\mc E}_k}$, $K_2=\sum_{k=1}^K\indicator{{\mc E}_k}$). For $\Gamma_0$, we have
\begin{equation}
    \label{eq:derivation_gamma0}
    \Gamma_0 = K_2\bb E_{s\sim\rho}\paren{V^{\pi^e}(s)-V^{\mr{ref}}(s)}.
\end{equation}
For $\Gamma_2$, we have 

\begin{equation}
    \label{eq:derivation_gamma2}
    \begin{split}
        \Gamma_2 = &\;\sum_{k=1}^K\bb E_{s\sim\rho}\brac{\indicator{{\mc E}_k}\paren{V_0^{\pi^e}(s)-\max_a f_0^k(s,a)}}\\
        \overset{(i)}{\leq} & \sum_{k=1}^K\indicator{\mc E_k}\sum_{h=0}^{H-1}\bb E_{s,a\sim d_h^{\pi^e}}\brac{\mc T f_{h+1}^k(s,a)-f_h^k(s,a)}\\
        \overset{(ii)}{\leq} & \sum_{k=1}^K \brac{C_{\pi^e}^\mr{ref}\cdot\indicator{\mc E_k}\sqrt{\sum_{h=0}^{H-1}\bb E_{s,a\sim\nu_h}\brac{\paren{f_h^k(s,a)-\mc T f_{h+1}^k(s,a)}^2}}} \\
        \overset{(iii)}{\leq} & K_1C_{\pi^e}^\mr{ref}\sqrt{H\cdot \Deltaoff},
    \end{split}
\end{equation}
where $\Deltaoff$ is similarly defined as~\citet{song2023hybrid} (See \eqref{eq:delta-off} of Lemma~\ref{lemma:bellman-error-bound}). Inequality $(i)$ holds because of Lemma~\ref{lemma:perf-diff-comparator}, inequality $(ii)$ holds by the definition of $C_{\pi^e}^\mr{ref}$ (Definition~\ref{def:bellman-error-coeff-ref}), inequality $(iii)$ holds by applying Lemma~\ref{lemma:bellman-error-bound} with the function class satisfying Assumption~\ref{assump:conservative-realizability-completeness}, and Definition~\ref{def:bellman-error-coeff-ref}. Note that the telescoping decomposition technique in the above equation also appears in~\citep{xie2020q,jin2021bellman,du2021bilinear}. Next, we will bound $\Gamma_1+\Gamma_3$:
\begin{equation}
    \begin{split}
        \Gamma_1+\Gamma_3  =& \sum_{k=1}^K\paren{\indicator{\mc E_k}+\indicator{\bar{\mc E}_k}}\bb E_{s\sim d_0}\brac{\max_a f_0^k(s,a)-V_0^{\pi^{f^k}}(s)}\\
        \overset{(i)}{\leq}& \sum_{k=1}^K\paren{\indicator{\mc E_k}+\indicator{\bar{\mc E}_k}}\sum_{h=0}^{H-1}\abs{\bb E_{s,a\sim d_h^{\pi^{f^k}}}\brac{f_h^k(s,a)-\mc T f_{h+1}^k(s,a)}}\\
        \overset{(ii)}{=}&\sum_{t=1}^K\brac{\paren{\indicator{\mc E_k}+\indicator{\bar{\mc E}_k}}\sum_{h=0}^{H-1}\abs{\innerprod{X_h(f^k)}{W_h(f^k)}}}\\
        \overset{(iii)}{\leq}&\sum_{k=1}^K\brac{\paren{\indicator{\mc E_k}+\indicator{\bar{\mc E}_k}}\sum_{h=0}^{H-1}\norm{X_h(f^k)}{\mb \Sigma^{-1}_{k-1;h}}\sqrt{\Deltaon+\lambda B_W^2}},
    \end{split}
\end{equation}
where $\Deltaon$ is similarly defined as~\citet{song2023hybrid} (See \eqref{eq:delta-on} of Lemma~\ref{lemma:bellman-error-bound}). Inequality~$(i)$ holds by Lemma~\ref{lemma:perf-diff-online}, equation~$(ii)$ holds by the definition of Bilinear model (\eqref{eq:bilinear-model} in Definition~\ref{def:bilinear-model}), inequality~$(iii)$ holds by Lemma~\ref{lemma:bilinear-upper-bound} and Lemma~\ref{lemma:bellman-error-bound} with the function class satisfying Assumption~\ref{assump:conservative-realizability-completeness}. Using Lemma~\ref{lemma:bound-inverse-cov-norm}, we have that 
{\small
\begin{equation}
    \label{eq:derivation_gamma13}
    \begin{split}
        &\Gamma_1+\Gamma_3\\
        \leq&\sum_{k=1}^K\brac{\paren{\indicator{\mc E_k}+\indicator{\bar{\mc E}_k}}\sum_{h=0}^{H-1}\norm{X_h(f^k)}{\mb \Sigma^{-1}_{k-1;h}}\sqrt{\Deltaon+\lambda B_W^2}}\\
        \overset{(i)}{\leq} &H\sqrt{2d\log\paren{1+\frac{K_1B_X^2}{\lambda d}}\cdot(\Deltaon+\lambda B_W^2)\cdot K_1}+H\sqrt{2d_\mr{ref}\log\paren{1+\frac{K_2B_X^2}{\lambda d_\mr{ref}}}\cdot(\Deltaon+\lambda B_W^2)\cdot K_2}\\
        \overset{(ii)}{\leq} &H\paren{\sqrt{2d\log\paren{1+\frac{K_1}{ d}}\cdot(\Deltaon+B_X^2 B_W^2)\cdot K_1}+\sqrt{2d_\mr{ref}\log\paren{1+\frac{K_2}{d_\mr{ref}}}\cdot(\Deltaon+B_X^2 B_W^2)\cdot K_2}},
    \end{split}
\end{equation}
}
where the first part of inequality $(i)$ holds by the assumption that the underlying MDPs have bellman rank $d$ (Definition~\ref{def:bilinear-model}) when $\bar{\mc E}_k$ happens, and the second part of inequality $(i)$ holds by the assumption 
that $\mc C_\mr{ref}$ has bilinear rank $d_\mr{ref}$ (Assumption~\ref{assump:conservative-bilinear-rank}) $\mc C_\mr{ref}$ has bellman rank $d_\mr{ref}$ when $\mc E_k$ happens. Inequality $(ii)$ holds by plugging in $\lambda = B_X^2$. Substituting \eqref{eq:derivation_gamma0}, inequality~\ref{eq:derivation_gamma2}, and inequality \eqref{eq:derivation_gamma13} into \eqref{eq:conservative-main-thm-derivation0}, we have 
{\small
\begin{equation}
    \begin{split}
        &\sum_{t=1}^K V^{\pi^e}-V^{\pi^{f^k}}\leq \Gamma_0 + \Gamma_2 + \Gamma_1 + \Gamma_3 \leq K_2\paren{V^{\pi^e}(s)-V^{\mr{ref}}(s)} + K_1C_{\pi^e}^\mr{ref}\sqrt{H\cdot \Deltaoff}\\
        +& H\paren{\sqrt{2d\log\paren{1+\frac{K_1}{ d}}\cdot(\Deltaon+B_X^2 B_W^2)\cdot K_1}+\sqrt{2d_\mr{ref}\log\paren{1+\frac{K_2}{d_\mr{ref}}}\cdot(\Deltaon+B_X^2 B_W^2)\cdot K_2}}
    \end{split}
\end{equation}
}

Plugging in the values of $\Deltaon,\Deltaoff$ from \eqref{eq:delta-off} and \eqref{eq:delta-on}, and using the subadditivity of the square root function, we have 
{\small
\begin{equation}
    \begin{split}
        &\sum_{k=1}^K V^{\pi^e}-V^{\pi^{f^k}}\\
        \leq&\;K_2\paren{V^{\pi^e}(s)-V^{\mr{ref}}(s)}+
        16\Vmax  C_{\pi^e}^\mr{ref}K_1\sqrt{\frac{H}{\moff}\log\paren{\frac{2HK_1|\mc F|}{\delta}}}\\
        \;&+ \paren{16V_{\max}\sqrt{\frac{1}{\mon}\log\paren{\frac{2HK_1|\mc F|}{\delta}}}+B_XB_W}\cdot H\sqrt{2dK_1\log\paren{1+\frac{K_1}{d}}}\\
        \;&+ \paren{16V_{\max}\sqrt{\frac{1}{\mon}\log\paren{\frac{2HK_2|\mc F|}{\delta}}}+B_XB_W}\cdot H\sqrt{2d_\mr{ref}K_2\log\paren{1+\frac{K_2}{d_\mr{ref}}}}.
    \end{split}
\end{equation}
}
Setting $\moff = K,\mon=1$ in the above equation completes the proof, we have
\begin{equation}
    \begin{split}
        &\sum_{k=1}^KV^{\pi^e}-V^{\pi^k}\\
        \leq&\; \wt{O}\paren{C_{\pi^e}^\mr{ref}\sqrt{HK_1\log\paren{|\mc F|/\delta}}} + \wt{O}\paren{H \sqrt{dK_1\log\paren{|\mc F|/\delta}}}\\
        &+K_2\paren{V^{\pi^e}(s)-V^{\mr{ref}}(s)}+\wt{O}\paren{H \sqrt{d_\mr{ref}K_2\log\paren{|\mc F|/\delta}}}\\
        \leq &\begin{cases}
            \wt{O}\paren{C^\mr{ref}_{\pi^e}H\sqrt{dK_1\log\paren{|\mc F|/\delta}}}&\text{ if } K_1\gg K_2,\\
            \wt{O}\paren{K_2\paren{V^{\pi^e}-V^\mr{ref}}+H\sqrt{d_\mr{ref}K_2\log\paren{|\mc F|/\delta}}}&\text{ otherwise.} 
        \end{cases}\\
        \leq&\; \wt{O}\paren{\min\Brac{C_{\pi^e}^\mr{ref}H\sqrt{dK\log\paren{|\mc F|/\delta}},\;K\paren{V^{\pi^e}-V^\mr{ref}}+H\sqrt{d_\mr{ref}K\log\paren{|\mc F|/\delta}}}},
    \end{split} 
\end{equation}
where the last inequality holds because $K_1,K_2\leq K$, which completes the proof.
\end{proof}

\section{Key Results of HyQ~\citep{song2023hybrid}}
In this section, we restate the major theoretical results of Hy-Q~\citep{song2023hybrid} for completeness. 
\subsection{Assumptions}

\label{subsec:HyQ-assumptions}

\begin{assumption}[Realizability and Bellman completeness]
\label{assump:realizability-completeness}
For any $h$, we have $Q^\star_h\in \mc F_h$, and additionally, for any $f_{h+1}\in \mc F_{h+1}$, we have $\mc T f_{h+1}\in \mc F_h$.
\end{assumption}

\begin{definition}[Bellman error transfer coefficient]
\label{def:bellman-error-coeff-hy-q}
For any policy $\pi$, we define the transfer coefficient as 
\begin{equation}
    C_\pi:=\max\Brac{0,\max_{f\in \mc F}\frac{\sum_{h=0}^{H-1}\bb E_{s,a\sim d_h^\pi}[\mc T f_{h+1}(s,a)-f_h(s,a)]}{\sqrt{\sum_{h=0}^{H-1}\bb E_{s,a\sim \nu_h}(\mc T f_{h+1}(s,a)-f_h(s,a))^2}}}.
\end{equation}
\end{definition}

\subsection{Main Theorem of Hy-Q}
\label{appendix:main-thm-HyQ}
\begin{theorem}[Theorem 1 of~\citet{song2023hybrid}]
\label{thm:HyQ-regret-decomposition}
    Fix $\delta\in(0,1),\moff=K,\mon=1$, and suppose that the underlying MDP admits Bilinear rank $d$ (Definition~\ref{def:bilinear-model}), and the function class $\mc F$ satisfies Assumption~\ref{assump:realizability-completeness}. Then with probability at least $1-\delta$, HyQ obtains the following bound on cumulative suboptimality w.r.t. any comparator policy $\pi^e$:
    \begin{equation}
        \begin{split}
        \regret(K)=\;\wt{O}\paren{\max\{C_{\pi^e},1\}\Vmax B_XB_W\sqrt{dH^2K\cdot\log(|\mc F|/\delta)}}.
        \end{split}
    \end{equation}
\end{theorem}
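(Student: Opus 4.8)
The plan is to reprove the Hy-Q regret bound by following exactly the two-part decomposition used in the proof of Theorem~\ref{thm:CalQL-regret-main}, specialized to the case where there is no calibration constraint: Hy-Q regresses over the full realizable class $\mc F$ (Assumption~\ref{assump:realizability-completeness}) rather than a pessimistic subclass, so the indicator events $\mc E_k, \bar{\mc E}_k$ that split the Cal-QL argument collapse and only a single branch survives. First I would write the per-iteration suboptimality against an arbitrary comparator $\pi^e$ as
\[
V^{\pi^e}_0(s) - V^{\pi^{f^k}}_0(s) = \underbrace{\paren{V^{\pi^e}_0(s) - \max_a f_0^k(s,a)}}_{\text{offline}} + \underbrace{\paren{\max_a f_0^k(s,a) - V^{\pi^{f^k}}_0(s)}}_{\text{online}},
\]
and sum over $k=1,\dots,K$ and $s\sim\rho$; this corresponds to the $\Gamma_2$ and $\Gamma_1+\Gamma_3$ terms of the Cal-QL derivation with the calibration indicators removed.

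For the offline term I would invoke the comparator performance-difference lemma (Lemma~\ref{lemma:perf-diff-comparator}) to bound it by $\sum_h \bb E_{s,a\sim d_h^{\pi^e}}[\mc T f_{h+1}^k(s,a)-f_h^k(s,a)]$, then apply the definition of the Bellman-error transfer coefficient (Definition~\ref{def:bellman-error-coeff-hy-q}) to trade the on-policy Bellman error for the offline-distribution error $\sqrt{\sum_h \bb E_{s,a\sim\nu_h}(\cdot)^2}$ at a cost of the factor $C_{\pi^e}$. The least-squares generalization bound (Lemma~\ref{lemma:bellman-error-bound}), which relies on realizability and completeness so that every regression target $\mc T f_{h+1}^k$ stays in class, controls the latter by $\wt{O}(\Vmax\sqrt{H\Deltaoff})$ with $\Deltaoff=\wt{O}(\log(|\mc F|/\delta)/\moff)$, giving a $C_{\pi^e}\Vmax\sqrt{HK\log(|\mc F|/\delta)}$ contribution once $\moff=K$.

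For the online term I would apply the online performance-difference lemma (Lemma~\ref{lemma:perf-diff-online}) to express it as $\sum_h \abs{\bb E_{s,a\sim d_h^{\pi^{f^k}}}[f_h^k-\mc T f_{h+1}^k]}$, rewrite each summand \emph{exactly} through the bilinear factorization $\abs{\innerprod{X_h(f^k)}{W_h(f^k)}}$ (Definition~\ref{def:bilinear-model}), and bound the inner product by $\norm{X_h(f^k)}{\mb \Sigma_{k-1;h}^{-1}}\sqrt{\Deltaon+\lambda B_W^2}$ (Lemma~\ref{lemma:bilinear-upper-bound}), where the residual is again pinned down by the least-squares bound. Summing the inverse-covariance norms over $k$ via the elliptical-potential / determinant-trace argument (Lemma~\ref{lemma:bound-inverse-cov-norm}) produces the $\sqrt{dK\log(1+K/d)}$ factor, and with $\mon=1$, $\lambda=B_X^2$ this yields an $H\sqrt{dK}\cdot(\Vmax\sqrt{\log(|\mc F|/\delta)}+B_XB_W)$ online bound.

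Finally I would combine the two: the online $H\sqrt{dK\log(|\mc F|/\delta)}$ term (carrying $B_XB_W$) dominates the offline $\sqrt{HK\log(|\mc F|/\delta)}$ term, and since the transfer coefficient of Definition~\ref{def:bellman-error-coeff-hy-q} may be smaller than one, replacing the offline prefactor $C_{\pi^e}$ and the online prefactor $1$ by their common upper bound $\max\{C_{\pi^e},1\}$ lets both be absorbed into a single expression; subadditivity of the square root then gives the claimed $\wt{O}(\max\{C_{\pi^e},1\}\Vmax B_XB_W\sqrt{dH^2K\log(|\mc F|/\delta)})$. I expect the main obstacle to be the online term: decoupling distribution shift from estimation error through the bilinear model and then converting the per-round quantities $\norm{X_h(f^k)}{\mb \Sigma_{k-1;h}^{-1}}$ into an aggregate $\sqrt{dK}$ scaling via the elliptical potential lemma, all while guaranteeing through a union bound that the least-squares generalization bound holds uniformly over the $K$ iterations and $H$ horizon steps (the source of the $\log(|\mc F|/\delta)$ factor).
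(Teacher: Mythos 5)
Your proposal is correct and is essentially the same argument the paper relies on: the paper itself only restates this theorem from \citet{song2023hybrid} without reproving it, and its proof of Theorem~\ref{thm:CalQL-regret-main} is precisely your argument augmented with the calibration indicator events $\mc E_k, \bar{\mc E}_k$. Your specialization—dropping those events, bounding the offline term via Lemma~\ref{lemma:perf-diff-comparator}, Definition~\ref{def:bellman-error-coeff-hy-q}, and Lemma~\ref{lemma:bellman-error-bound}, and the online term via Lemma~\ref{lemma:perf-diff-online}, the bilinear factorization, Lemma~\ref{lemma:bilinear-upper-bound}, and Lemma~\ref{lemma:bound-inverse-cov-norm} with $\moff=K$, $\mon=1$, $\lambda=B_X^2$—recovers the stated bound exactly as intended.
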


\subsection{Key Lemmas}
\subsubsection{Least Squares Generalization and Applications}

\begin{lemma}[Lemma 7 of~\citet{song2023hybrid}, Online and Offline Bellman Error Bound for FQI]
    \label{lemma:bellman-error-bound}
    Let $\delta\in(0,1)$ and $\forall h\in[H-1],k\in[K]$, let $f_h^{k+1}$ be the estimated value function for time step $h$ computed via least square regression using samples in the dataset $\{\mc D^\nu_h,\mc D^1_h,\dots,\mc D^T_h\}$ in \eqref{eq:least-square-our-method} in the iteration $t$ of Algorithm~\ref{algo:CalQL-thm}. Then with probability at least $1-\delta$, for any $h\in[H-1]$ and $k\in[K]$, we have
    \begin{equation}
        \label{eq:delta-off}
        \norm{f_h^{k+1}-\mc Tf_{h+1}^{k+1}}{2,\nu_h}^2\leq\frac{1}{\moff}256V^2_{\max}\log(2HK|\mc F|/\delta)=:\Deltaoff
    \end{equation}
    and
    \begin{equation}
        \label{eq:delta-on}
        \sum_{\tau=1}^k\norm{f_h^{k+1}-\mc Tf_{h+1}^{k+1}}{2,\mu^\tau_h}^2\leq\frac{1}{\mon}256V^2_{\max}\log(2HK|\mc F|/\delta)=:\Deltaon,
    \end{equation}
    where $\nu_h$ denotes the offline data distribution at time $h$, and the distribution $\mu_h^\tau\in\Delta(s,a)$ is defined such that $s,a\sim d^{\pi^\tau}_h$.
\end{lemma}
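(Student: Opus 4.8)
The plan is to establish this as a standard least-squares (fitted-Q-iteration) generalization bound, where the only wrinkle relative to ordinary i.i.d.\ regression is that the online samples are collected adaptively. First I would fix a stage $h$ and an iteration $k$ and regard $f_{h+1}^{k+1}$ as a \emph{pivot}, observing that for each transition $(s,a,r,s')$ the regression target $y := r + \max_{a'} f_{h+1}^{k+1}(s',a')$ has conditional mean $\bb E[y \mid s,a] = \mc T f_{h+1}^{k+1}(s,a)$ by the definition of the Bellman operator $\mc T$. Hence $\mc T f_{h+1}^{k+1}$ is exactly the Bayes-optimal regressor for this squared-loss problem, and by the completeness half of Assumption~\ref{assump:conservative-realizability-completeness} it lies in $\mc C_h$, so the target is realizable within the class over which \eqref{eq:least-square-our-method} optimizes. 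This reduces the claim to bounding the population squared error between the empirical risk minimizer and the Bayes regressor on each data distribution.

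Next I would exploit optimality of the minimizer. Since $f_h^{k+1}$ minimizes the aggregated empirical squared loss and $\mc T f_{h+1}^{k+1}$ is a feasible competitor in $\mc C_h$, the empirical loss of $f_h^{k+1}$ is no larger than that of $\mc T f_{h+1}^{k+1}$. Using the squared-loss identity $(f-y)^2 - (\mc T f_{h+1}^{k+1}-y)^2 = (f-\mc T f_{h+1}^{k+1})^2 + 2(f-\mc T f_{h+1}^{k+1})(\mc T f_{h+1}^{k+1}-y)$, whose last term has conditional mean zero because $\mc T f_{h+1}^{k+1}$ is the conditional mean of $y$, the population excess risk equals the squared Bellman error $\norm{f_h^{k+1}-\mc T f_{h+1}^{k+1}}{2,\beta}^2$ under the relevant distribution $\beta$. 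Transferring the (nonpositive) empirical excess risk to the population version then requires only concentrating the mean-zero cross-term; a one-sided Bernstein/Freedman argument, using the self-bounding property of bounded-target squared losses (targets in $[0,\Vmax]$), gives the $1/(\text{sample count})$ rate. The offline block supplies $\moff$ samples and yields \eqref{eq:delta-off} with rate $1/\moff$; summing the online blocks $\tau=1,\dots,k$, each of size $\mon$, yields the cumulative bound \eqref{eq:delta-on} at rate $1/\mon$.

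To make these inequalities hold simultaneously for the data-dependent choices of $f_h^{k+1}$ and of the pivot $f_{h+1}^{k+1}$, I would union-bound over the finite product class $\mc F_h \times \mc F_{h+1}$ and then over the $H$ stages and $K$ iterations, which is exactly what produces the $\log(2HK|\mc F|/\delta)$ factor and the universal constant (here $256$); because $\mc C \subseteq \mc F$ it suffices to pay $\log|\mc F|$. The main obstacle is the \emph{adaptivity} of the online phase: the rollout policy $\pi^\tau$ is greedy with respect to $f^\tau$, which is itself learned from data collected in rounds $1,\dots,\tau-1$, so the online samples are not i.i.d.\ and the distributions $\mu_h^\tau = d_h^{\pi^\tau}$ are random. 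The remedy is to treat the per-sample loss differences as a martingale-difference sequence with respect to the natural filtration and invoke a Freedman-type tail bound rather than a Hoeffding/Chernoff bound; the union bound over the finite class decouples the estimator's data dependence from the concentration event, so the guarantee transfers to whichever $f_h^{k+1}, f_{h+1}^{k+1}$ the algorithm actually selects. Everything else is routine bias-free squared-loss bookkeeping: since this is precisely Lemma 7 of~\citet{song2023hybrid} restricted to the conservative subclass $\mc C \subseteq \mc F$, realizability and completeness within $\mc C$ (Assumption~\ref{assump:conservative-realizability-completeness}) are all that the argument additionally needs.
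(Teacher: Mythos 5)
Your proposal cannot be compared against a proof in the paper, because the paper does not prove this lemma at all: it appears in the appendix section that restates the key results of Hy-Q ``for completeness,'' and its sole justification is the citation to Lemma 7 of \citet{song2023hybrid}. That said, the argument you outline --- identifying $\mc T f_{h+1}^{k+1}$ as the Bayes regressor for targets $r+\max_{a'}f_{h+1}^{k+1}(s',a')$, using completeness to make it a feasible ERM competitor, the squared-loss decomposition with mean-zero cross term, a Freedman-type martingale bound to handle the adaptively collected online rounds, and a union bound over the function class, the $H$ stages, and the $K$ iterations yielding the $\log(2HK|\mc F|/\delta)$ factor --- is in structure exactly the proof of the cited lemma, so in substance you have reproduced the source's argument rather than found a different route. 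One caveat, specific to this paper's restatement rather than to the original: in Algorithm~\ref{algo:CalQL-thm} the pivot $f_{h+1}^{k+1}$ used to form the regression targets at stage $h$ has already been clipped to $\max\{f_{h+1}^{k+1}, Q^{\mr{ref}}_{h+1}\}$ by the calibration step, and Assumption~\ref{assump:conservative-realizability-completeness} does not guarantee that $\mc C_{h+1}$ is closed under pointwise maximum with $Q^{\mr{ref}}_{h+1}$, so your appeal to completeness ($\mc T f_{h+1}^{k+1}\in \mc C_h$) needs an additional closure argument or assumption. This issue does not arise in \citet{song2023hybrid}, where there is no clipping step, and the paper silently inherits it whenever it invokes the lemma for the clipped iterates; it is worth flagging, but it is a gap in the paper's formalization as much as in your sketch.
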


\subsubsection{Bounding Offline Suboptimality via Performance Difference Lemma}
\begin{lemma}[Lemma 5 of~\citet{song2023hybrid}, performance difference lemma of w.r.t. $\pi^e$] 
\label{lemma:perf-diff-comparator}
Let $\pi^e=(\pi_0^e,\dots,\pi_{H-1}^e)$ be a comparator policy and consider any value function $f=(f_0,\dots,f_{H-1})$, where $f_h:\mc S\times \mc A\mapsto \bb R$. Then we have
\begin{equation}
    \bb E_{s\sim d_0}\brac{V_0^{\pi^e}(s)-\max_a f_0(s,a)}\leq \sum_{i=1}^{H-1}\bb E_{s,a\sim d_i^{\pi^e}}\brac{\mc Tf_{i+1}(s,a)-f_i(s,a)},
\end{equation}
where we define $f_H(s,a)=0,\forall (s,a)$.
\end{lemma}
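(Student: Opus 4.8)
The plan is to prove this by the standard telescoping (performance-difference) argument, peeling off one Bellman residual at each time step. First I would eliminate the $\max$ on the left-hand side by specializing to the comparator's own action: since $\max_a f_0(s,a) \ge \bb E_{a\sim \pi_0^e(\cdot|s)}\brac{f_0(s,a)}$ and $V_0^{\pi^e}(s) = \bb E_{a\sim \pi_0^e(\cdot|s)}\brac{Q_0^{\pi^e}(s,a)}$, the integrand on the left is bounded above by $\bb E_{a\sim \pi_0^e(\cdot|s)}\brac{Q_0^{\pi^e}(s,a) - f_0(s,a)}$. More generally, defining $\delta_h(s) := V_h^{\pi^e}(s) - \max_a f_h(s,a)$, the same step gives $\delta_h(s) \le \bb E_{a\sim \pi_h^e(\cdot|s)}\brac{Q_h^{\pi^e}(s,a) - f_h(s,a)}$.

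Next I would introduce the Bellman operator $\mc T$ to split each gap into a one-step residual plus a next-layer gap:
\begin{equation}
Q_h^{\pi^e}(s,a) - f_h(s,a) = \paren{\mc T f_{h+1}(s,a) - f_h(s,a)} + \paren{Q_h^{\pi^e}(s,a) - \mc T f_{h+1}(s,a)}.
\end{equation}
The crucial observation is that the expected reward terms inside $Q_h^{\pi^e}$ and inside $\mc T f_{h+1}$ coincide and cancel, while the Bellman consistency equation for $Q^{\pi^e}$ backs up the next state with $\pi^e$ and $\mc T$ backs it up with a $\max$; hence
\begin{equation}
Q_h^{\pi^e}(s,a) - \mc T f_{h+1}(s,a) = \bb E_{s'\sim P(s,a)}\brac{V_{h+1}^{\pi^e}(s') - \max_{a'} f_{h+1}(s',a')} = \bb E_{s'\sim P(s,a)}\brac{\delta_{h+1}(s')}.
\end{equation}
Combining the two displays produces the recursion $\delta_h(s) \le \bb E_{a\sim \pi_h^e(\cdot|s)}\brac{\mc T f_{h+1}(s,a) - f_h(s,a)} + \bb E_{a\sim \pi_h^e(\cdot|s), s'\sim P(s,a)}\brac{\delta_{h+1}(s')}$.

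Then I would take expectations under the occupancy $d_h^{\pi^e}$ and invoke the Markov flow identity: propagating $d_h^{\pi^e}$ through one step of $\pi^e$ yields exactly $d_{h+1}^{\pi^e}$. Writing $e_h := \bb E_{s\sim d_h^{\pi^e}}\brac{\delta_h(s)}$, this collapses to $e_h \le \bb E_{s,a\sim d_h^{\pi^e}}\brac{\mc T f_{h+1}(s,a) - f_h(s,a)} + e_{h+1}$. Unrolling from $h=0$ and using the finite-horizon terminal conditions $f_H \equiv 0$ and $V_H^{\pi^e} \equiv 0$ (so that $e_H = 0$) telescopes the bound into $e_0 \le \sum_{h=0}^{H-1} \bb E_{s,a\sim d_h^{\pi^e}}\brac{\mc T f_{h+1}(s,a) - f_h(s,a)}$, which is the stated inequality (with the sum taken over all $h\in\{0,\dots,H-1\}$, exactly as it is applied in the $\Gamma_2$ bound of the proof of Theorem~\ref{thm:CalQL-regret-main}).

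I do not anticipate a genuine obstacle: once the decomposition is in place the argument is purely mechanical. The only point requiring care is the asymmetry between the two backups — $Q^{\pi^e}$ follows the comparator while $\mc T$ takes a $\max$ — which is exactly what lets the residual collapse into $\delta_{h+1}$; the one place a sign error could slip in is the inequality $\max_a f_h(s,a) \ge \bb E_{a\sim\pi_h^e}\brac{f_h(s,a)}$, whose direction is what makes the whole telescope an upper bound rather than an equality.
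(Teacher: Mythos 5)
Your proof is correct, and it is the standard telescoping performance-difference argument: peel off the max via the comparator's own action, split $Q_h^{\pi^e}-f_h$ into a Bellman residual plus a next-step gap, cancel rewards, and push the occupancy forward one step. Note that the paper itself offers no proof to compare against --- this lemma is imported verbatim as Lemma 5 of \citet{song2023hybrid} --- and your argument is essentially the proof given in that reference.

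One detail worth flagging: your derivation produces the sum $\sum_{h=0}^{H-1}$, whereas the statement as transcribed in the paper starts the sum at $i=1$. The transcribed form is a typo: for $H=1$ the right-hand side would be an empty sum, asserting $\mathbb{E}_{s\sim d_0}\left[V_0^{\pi^e}(s)-\max_a f_0(s,a)\right]\leq 0$ for \emph{arbitrary} $f_0$, which fails (e.g., $f_0\equiv 0$ with positive rewards). The form you derived, with the sum over all $h\in\{0,\dots,H-1\}$, matches the original lemma in \citet{song2023hybrid} and is exactly what the paper invokes in the $\Gamma_2$ bound in the proof of Theorem~\ref{thm:CalQL-regret-main}, so your handling of the index range is the right one.
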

\subsubsection{Bounding Online Suboptimality via Performance Difference Lemma}
\begin{lemma}[Lemma 4 of~\citet{song2023hybrid}, performance difference lemma]
    \label{lemma:perf-diff-online}
    For any function $f=(f_0,\dots,f_{H-1})$ where $f_h:\mc S\times \mc A\mapsto \bb R$ and $h\in[H-1]$, we have
    \begin{equation}
        \bb E_{s\sim d_0}\brac{\max_a f_0(s,a)-V_0^{\pi^f}(s)}\leq \sum_{h=0}^{H-1}\abs{\bb E_{s,a\sim d_h^{\pi^f}}\brac{f_h(s,a)-\mc Tf_{h+1}(s,a)}},
    \end{equation}
    where we define $f_H(s,a)=0,\forall s,a$.
\end{lemma}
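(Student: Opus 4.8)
The plan is to prove Lemma~\ref{lemma:perf-diff-online} by a step-by-step telescoping of the value gap along the on-policy occupancy of the greedy policy $\pi^f$. The essential observation is that, because $\pi^f$ is greedy with respect to $f$, we have $\max_a f_h(s,a) = f_h(s,\pi^f_h(s))$ at every level $h$; this lets me identify the estimated greedy value with $f$ evaluated at the policy's own action and convert the difference $\max_a f_0(s,a) - V_0^{\pi^f}(s)$ into a sum of one-step Bellman residuals $f_h - \mc T f_{h+1}$ measured \emph{exactly} under the distributions $d_h^{\pi^f}$ that $\pi^f$ visits.

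Concretely, first I would fix $s_0$, write $a_0 = \pi^f_0(s_0)$, and insert the Bellman backup: $f_0(s_0,a_0) - V_0^{\pi^f}(s_0) = \paren{f_0(s_0,a_0) - \mc T f_1(s_0,a_0)} + \paren{\mc T f_1(s_0,a_0) - V_0^{\pi^f}(s_0)}$. The second bracket simplifies because both $\mc T f_1(s_0,a_0)$ and $V_0^{\pi^f}(s_0) = Q_0^{\pi^f}(s_0,a_0)$ carry the same immediate-reward term $\bb E[R(s_0,a_0)]$; subtracting them leaves $\bb E_{s_1 \sim P(s_0,a_0)}\brac{\max_{a'} f_1(s_1,a') - V_1^{\pi^f}(s_1)}$, and the greedy identity rewrites $\max_{a'} f_1(s_1,a')$ as $f_1(s_1,\pi^f_1(s_1))$. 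This reproduces the original quantity one step deeper in the horizon, so the recursion unrolls.

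Unrolling all $H$ levels and taking the expectation over $s_0 \sim d_0$ yields the exact identity $\bb E_{s \sim d_0}\brac{\max_a f_0(s,a) - V_0^{\pi^f}(s)} = \sum_{h=0}^{H-1} \bb E_{s,a \sim d_h^{\pi^f}}\brac{f_h(s,a) - \mc T f_{h+1}(s,a)}$, where the chain terminates cleanly via the boundary conventions $f_H \equiv 0$ and $V_H^{\pi^f} \equiv 0$. The stated inequality then follows immediately by bounding each summand by its absolute value with the triangle inequality.

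I do not expect a genuine obstacle here, as this is the standard finite-horizon simulation/performance-difference identity. The only points requiring care are (i) invoking the greedy property at every level so that the telescoping is an equality rather than merely an inequality, and (ii) handling the stochastic reward and transition so that the immediate-reward terms cancel correctly inside each one-step residual, with the triangle inequality applied only at the very end. An analogous telescoping argument, now measured against the comparator's occupancy $d_h^{\pi^e}$ and using $\max_a f_0(s,a) \ge V_0^{\pi^e}(s)$ pointwise, would give the companion bound in Lemma~\ref{lemma:perf-diff-comparator}, which I would treat in parallel.
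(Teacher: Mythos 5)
Your proof is correct. Note that the paper does not actually prove this lemma: it is restated verbatim as Lemma 4 of \citet{song2023hybrid} in an appendix whose stated purpose is to collect Hy-Q's results for completeness, so there is no in-paper argument to compare against. Your derivation --- using greediness to write $\max_a f_h(s,a) = f_h(s,\pi^f_h(s))$, cancelling the immediate-reward terms so that each one-step residual $f_h - \mathcal{T}f_{h+1}$ is measured exactly under $d_h^{\pi^f}$, terminating the recursion with $f_H \equiv 0$, and invoking the triangle inequality only at the very end --- is the standard telescoping argument and matches how the cited reference establishes it; the intermediate identity you obtain (equality before taking absolute values) is exactly right.

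One small caveat on your closing side remark: the companion bound in Lemma~\ref{lemma:perf-diff-comparator} does not follow from ``$\max_a f_0(s,a) \ge V_0^{\pi^e}(s)$ pointwise,'' which is false for arbitrary $f$. The pointwise fact actually used in that telescoping is $\max_a f_h(s,a) \ge f_h(s,\pi^e_h(s))$ at every level $h$, which is what turns the comparator decomposition into a one-sided inequality. This does not affect your proof of the lemma at hand, which stands as written.
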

\begin{lemma}[Lemma 8 of \citet{song2023hybrid}, upper bounding bilinear class]
    \label{lemma:bilinear-upper-bound}
    For any $k\geq2$ and $h\in [H-1]$, we have
    \begin{equation}
        \abs{\innerprod{W_h(f^k)}{X_h(f^k)}}\leq \norm{X_h(f^k)}{\mb \Sigma^{-1}_{k-1;h}}\sqrt{\sum_{i=1}^{k-1}\bb E_{s,a\sim d_h^{f^i}}\brac{\paren{f_h^k-\mc T f_{h+1}^k}^2}+\lambda B_W^2},
    \end{equation}
    where $\mb \Sigma_{k-1;h}$ is defined as \eqref{eq:covariance-matrix} and we use $d_h^{f^k}$ to denote $d_h^{\pi^{f^k}}$.
\end{lemma}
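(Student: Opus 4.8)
The plan is to derive the bound from a generalized Cauchy--Schwarz inequality in the geometry induced by the positive-definite matrix $\mb \Sigma_{k-1;h}$, together with the defining identity of the bilinear model (Definition~\ref{def:bilinear-model}). First I would split the inner product by inserting $\mb \Sigma_{k-1;h}^{1/2}\mb \Sigma_{k-1;h}^{-1/2}$ and applying ordinary Cauchy--Schwarz:
\begin{equation*}
\abs{\innerprod{W_h(f^k)}{X_h(f^k)}} \leq \norm{X_h(f^k)}{\mb \Sigma_{k-1;h}^{-1}}\cdot\norm{W_h(f^k)}{\mb \Sigma_{k-1;h}}.
\end{equation*}
This is legitimate because $\lambda>0$ forces $\mb \Sigma_{k-1;h}\succeq\lambda\mb I\succ 0$, so the inverse and square roots are well defined. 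The step already isolates the factor $\norm{X_h(f^k)}{\mb \Sigma_{k-1;h}^{-1}}$ appearing in the claim, so the remaining task is to show that $\norm{W_h(f^k)}{\mb \Sigma_{k-1;h}}^2$ is at most the radicand.

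For that, I would expand the quadratic form against the explicit definition $\mb \Sigma_{k-1;h}=\sum_{\tau=1}^{k-1}X_h(f^\tau)(X_h(f^\tau))^\top+\lambda\mb I$ from \eqref{eq:covariance-matrix}, obtaining
\begin{equation*}
\norm{W_h(f^k)}{\mb \Sigma_{k-1;h}}^2 = \sum_{\tau=1}^{k-1}\innerprod{X_h(f^\tau)}{W_h(f^k)}^2 + \lambda\norm{W_h(f^k)}{2}^2.
\end{equation*}
The two summands are then handled separately. For each term in the sum, the bilinear identity rewrites the inner product as a population Bellman error, $\abs{\innerprod{X_h(f^\tau)}{W_h(f^k)}}=\abs{\bb E_{s,a\sim d_h^{f^\tau}}[f_h^k(s,a)-\mc T f_{h+1}^k(s,a)]}$, after which Jensen's inequality (convexity of $x\mapsto x^2$) upgrades the squared expectation to the expectation of the square, yielding $\innerprod{X_h(f^\tau)}{W_h(f^k)}^2\leq\bb E_{s,a\sim d_h^{f^\tau}}[(f_h^k-\mc T f_{h+1}^k)^2]$. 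The regularization term is bounded directly by the norm constraint of the bilinear model, $\lambda\norm{W_h(f^k)}{2}^2\leq\lambda B_W^2$. Summing the two bounds reproduces exactly the expression under the square root in the statement, and substituting back into the Cauchy--Schwarz step completes the argument.

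I do not expect any genuinely hard step here: once the bilinear structure is available the computation is mechanical. The one place deserving attention is the \emph{direction} of Jensen's inequality --- it is this deliberately loose passage from $(\bb E[\cdot])^2$ to $\bb E[(\cdot)^2]$ that converts the per-iteration inner products into the weighted-$\ell^2$ Bellman errors $\norm{f_h^{k}-\mc T f_{h+1}^{k}}{2,\mu_h^\tau}^2$ that the least-squares generalization bound (Lemma~\ref{lemma:bellman-error-bound}, via $\Deltaon$) is later able to control. Thus the looseness incurred here is precisely what makes the lemma usable in the regret decomposition downstream.
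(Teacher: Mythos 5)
Your proof is correct and is essentially the standard argument: the paper itself does not reprove this lemma (it restates it as Lemma 8 of \citet{song2023hybrid} without proof), and the original proof in that work proceeds exactly as you do --- Cauchy--Schwarz in the $\mb\Sigma_{k-1;h}$-induced geometry, expansion of $\norm{W_h(f^k)}{\mb\Sigma_{k-1;h}}^2$ against the definition of the covariance matrix, the bilinear identity to convert each $\innerprod{X_h(f^\tau)}{W_h(f^k)}$ into an averaged Bellman error, Jensen's inequality to pass from $(\bb E[\cdot])^2$ to $\bb E[(\cdot)^2]$, and the bound $\norm{W_h(f^k)}{2}\leq B_W$ for the regularization term. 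Your closing remark about why the Jensen step is the right kind of looseness (matching the $\ell^2$ errors controlled by Lemma~\ref{lemma:bellman-error-bound}) is also accurate.
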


\begin{lemma}[Lemma 6 of~\citet{song2023hybrid}, bounding the inverse covariance norm]
    \label{lemma:bound-inverse-cov-norm}
    Let $X_h(f^1),\dots,X_h(f^K)\in \bb R^d$ be a sequence of vectors with $\norm{X_h(f^k)}{2}\leq B_X<\infty,\forall k\leq K$. Then we have
    \begin{equation}
        \sum_{k=1}^K\norm{X_h(f^k)}{\mb \Sigma^{-1}_{k-1;h}}\leq\sqrt{2dK\log\paren{1+\frac{KB_X^2}{\lambda d}}},
    \end{equation}
    where we define $\mb \Sigma_{k;h}:=\sum_{\tau=1}^kX_h(f^\tau)X_h(f^\tau)^T+\lambda \mb I$ and we assume $\lambda\geq B_X^2$ holds $\forall k\in[K]$.
\end{lemma}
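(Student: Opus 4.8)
The plan is to prove this statement, which is the classical elliptical potential lemma, by combining a Cauchy--Schwarz reduction with a log-determinant telescoping argument. First I would apply Cauchy--Schwarz over the $K$ summands to pass from the sum of inverse-covariance norms to the square root of the sum of their squares:
\[
\sum_{k=1}^K\norm{X_h(f^k)}{\mb \Sigma^{-1}_{k-1;h}}\leq\sqrt{K\sum_{k=1}^K\norm{X_h(f^k)}{\mb \Sigma^{-1}_{k-1;h}}^2}.
\]
It then suffices to show that the sum of squared inverse-covariance norms is at most $2d\log\paren{1+KB_X^2/(\lambda d)}$, since substituting this bound and simplifying the square root gives exactly the claimed expression.

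To control the squared terms, I would first use the hypothesis $\lambda\geq B_X^2$ to confine each term to $[0,1]$: because $\mb \Sigma_{k-1;h}\succeq\lambda\mb I$, we have $\mb \Sigma^{-1}_{k-1;h}\preceq\lambda^{-1}\mb I$, hence $\norm{X_h(f^k)}{\mb \Sigma^{-1}_{k-1;h}}^2\leq\norm{X_h(f^k)}{2}^2/\lambda\leq B_X^2/\lambda\leq 1$. On the interval $[0,1]$ the elementary inequality $x\leq 2\log(1+x)$ applies, so each squared norm is dominated by $2\log\paren{1+\norm{X_h(f^k)}{\mb \Sigma^{-1}_{k-1;h}}^2}$. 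This step converts the sum of squared norms into a telescoping-friendly sum of logarithms.

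The key algebraic ingredient is the rank-one determinant update. Writing $\mb \Sigma_{k;h}=\mb \Sigma_{k-1;h}+X_h(f^k)X_h(f^k)^\top$ and invoking the matrix determinant lemma gives $\det\mb \Sigma_{k;h}=\det\mb \Sigma_{k-1;h}\cdot\paren{1+\norm{X_h(f^k)}{\mb \Sigma^{-1}_{k-1;h}}^2}$, so that
\[
\sum_{k=1}^K\log\paren{1+\norm{X_h(f^k)}{\mb \Sigma^{-1}_{k-1;h}}^2}=\log\paren{\det\mb \Sigma_{K;h}/\det\mb \Sigma_{0;h}}.
\]
Since $\mb \Sigma_{0;h}=\lambda\mb I$ has determinant $\lambda^d$, and since AM--GM on the eigenvalues together with $\mr{tr}(\mb \Sigma_{K;h})\leq\lambda d+KB_X^2$ yields $\det\mb \Sigma_{K;h}\leq\paren{\lambda+KB_X^2/d}^d$, the ratio is bounded by $d\log\paren{1+KB_X^2/(\lambda d)}$. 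Chaining the three displays produces $\sum_k\norm{X_h(f^k)}{\mb \Sigma^{-1}_{k-1;h}}^2\leq 2d\log\paren{1+KB_X^2/(\lambda d)}$, and the Cauchy--Schwarz step completes the proof.

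I expect the only real subtlety to be the bookkeeping that forces each squared norm into $[0,1]$ so that $x\leq 2\log(1+x)$ is legitimate; this is precisely where the assumption $\lambda\geq B_X^2$ is essential rather than cosmetic. The remaining pieces — the determinant telescoping and the trace bound via AM--GM — are routine, so no deeper obstacle is anticipated beyond verifying these normalizations carefully.
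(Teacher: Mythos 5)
Your proof is correct and is essentially the same argument as the paper's source: the paper states this lemma without proof, citing Lemma 6 of Song et al., whose proof (the classical elliptical potential lemma, also in Lattimore and Szepesv\'ari) uses exactly your chain of Cauchy--Schwarz, the bound $\lambda \geq B_X^2$ to confine each squared norm to $[0,1]$ so that $x \leq 2\log(1+x)$ applies, the rank-one determinant update with telescoping, and the trace/AM--GM bound on $\det \mathbf{\Sigma}_{K;h}$. All steps check out, including the correct identification of where the assumption $\lambda \geq B_X^2$ is genuinely needed.
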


\end{document}